\newtheorem{assumption}{Assumption}
\newtheorem{theorem}{Theorem}
\newtheorem{proposition}{Proposition}
\newcommand{\pa}{\mathrm{pa}}
\newtheorem{lemma}{Lemma}
\newtheorem{remark}{Remark}
\DeclareMathOperator{\Var}{Var}
\newcommand{\diag}{\mathrm{diag}}
\def\BibTeX{{\rm B\kern-.05em{\sc i\kern-.025em b}\kern-.08em
    T\kern-.1667em\lower.7ex\hbox{E}\kern-.125emX}}
\title{An Expectation-Maximization Algorithm for Domain Adaptation in Gaussian Causal Models}
\author{
\IEEEauthorblockN{Mohammad Ali Javidian}\\
\IEEEauthorblockA{\textit{Computer Science Department} \\
\textit{Appalachian State University}\\
Boone, USA \\
javidianma@appstate.edu}
\thanks{An earlier version of this work was accepted for the \textit{Proceedings of the 2025 IEEE International Conference on Data Mining (ICDM)}.}%
}
\begin{document}
\maketitle

\begin{abstract}
We study the problem of imputing a designated target variable that is systematically missing in a shifted deployment domain, when a Gaussian causal DAG is available from a fully observed source domain. We propose a unified EM-based framework that combines source and target data through the DAG structure to transfer information from observed variables to the missing target.
On the methodological side, we formulate a population EM operator in the DAG parameter space and introduce a first-order (gradient) EM update that replaces the costly generalized least-squares M-step with a single projected gradient step. Under standard local strong-concavity and smoothness assumptions and a BWY-style \cite{Balakrishnan2017EM} gradient-stability (bounded missing-information) condition, we show that this first-order EM operator is locally contractive around the true target parameters, yielding geometric convergence and finite-sample guarantees on parameter error and the induced target-imputation error in Gaussian SEMs under covariate shift and local mechanism shifts.
Algorithmically, we exploit the known causal DAG to freeze source-invariant mechanisms and re-estimate only those conditional distributions directly affected by the shift, making the procedure scalable to higher-dimensional models. In experiments on a synthetic seven-node SEM, the 64-node MAGIC-IRRI genetic network, and the Sachs protein-signaling data, the proposed DAG-aware first-order EM algorithm improves target imputation accuracy over a fit-on-source Bayesian network and a Kiiveri-style EM baseline, with the largest gains under pronounced domain shift.
\end{abstract}

\begin{IEEEkeywords}
Data Shift, EM algorithm, Causality, DAG, Gaussian SEM, Missing Data.
\end{IEEEkeywords}

\section{Introduction}\label{sec:intro}

\noindent\textbf{Domain Adaptation.}
Domain adaptation studies how to transfer predictive models learned in a \emph{source} domain to a \emph{target} domain whose data distribution differs. Two canonical shifts have been discussed in the literature:
\begin{enumerate}[leftmargin=*]
  \item \textbf{Covariate shift} occurs when the marginals of the \emph{context} variables differ between source and target, while the conditional
    $P(Y \mid X)$ remains invariant \cite{SHIMODAIRA2000,sugiyama2008direct,johansson19a}.
  \item \textbf{Label shift} (sometimes called \emph{target shift}) arises when the marginal of the \emph{label} changes across domains, but $P(X \mid Y)$ is unchanged \cite{Storkey09,zhang2013domain,Lipton18}.
\end{enumerate}
For an overview of additional domain adaptation scenarios and theoretical results, we refer the reader to \cite{redko2019advances}.
In this work, we focus on \emph{covariate shift} and \emph{local mechanism shifts} in a causal model: the target domain may modify a \emph{small subset of conditional distributions} in the DAG (e.g., the mechanism generating a designated target node $T$), while the remaining mechanisms remain invariant.

\noindent\textbf{Causal Inference for Domain Adaptation.}
Causal methods can exploit the underlying cause--effect structure in the data to guard against distributional shifts \cite{nastl2024causal,sun2021recovering,li2022invariant,teshima2020few,chen2021domain,wu2024causality,subbaswamy2022unifying}. Key approaches include:
\begin{itemize}[leftmargin=*]
  \item \textbf{Transportability} formalizes differences and commonalities between populations via \emph{selection diagrams}, using do-calculus \cite{Pearl09} to decide when interventional or observational effects can be carried over \cite{BareinboimPearl11,BareinboimPearl12,BareinboimPearl14,Correa-ijcai2019}.
  \item \textbf{Invariant causal prediction} (ICP) seeks subsets of predictors whose regression residuals exhibit identical distributions across environments \cite{peters2016causal,pfister2019invariant,pfister2019stabilizing}. Identifiability in nonlinear or partially observed settings remains challenging \cite{Glymour19}.
  \item \textbf{Graph surgery} removes unstable mechanisms from the factorization to enforce cross-domain invariance \cite{subbaswamy2018counterfactual,subbaswamy2019preventing}.
  \item \textbf{Graph pruning} frames adaptation as selecting predictor subsets that yield invariant conditionals \cite{MagliacaneNIPS18,rojas2018invariant,kouw2019review,javidian2021scalable}.
\end{itemize}
However, even when a subset \(A\) can be found that guarantees zero transfer bias (e.g., via pruning), the resulting incomplete-information bias can still yield large prediction errors.
Moreover, approaches such as graph surgery may require estimating causal effects or counterfactual reasoning, and many methods face scalability limitations.
In this paper, we take a different tack: under a linear--Gaussian SEM with a known DAG, we treat imputation in the shifted target domain as a \emph{missing-data} problem and develop an EM-based estimator whose \emph{first-order} updates admit BWY-style \cite{Balakrishnan2017EM} \emph{local} contraction and finite-sample error guarantees in the \emph{DAG parameter space}.

\begin{remark}[Local vs.\ basin-of-attraction guarantees (BWY-style)]
Geometric convergence results for EM are typically \emph{local} with respect to initialization. In particular, BWY-style analyses \cite{Balakrishnan2017EM} provide a quantitative \emph{basin of attraction} around the population global optimum (or optimal set) within which the EM/first-order EM operator is contractive, yielding geometric convergence to a fixed point that is within statistical precision of the population optimum. This should not be confused with global convergence from arbitrary initialization.
\end{remark}

\noindent\textbf{A Motivating Example.}
We work with the linear-Gaussian SEM whose causal structure is depicted in Fig.~\ref{fig:motivEXDAG}.
The seven nodes consist of two \emph{context} variables \(C_1, C_2\), two intermediate features \(Z, X\), the designated \emph{target} variable \(T\), and two downstream outcomes \(P, Y\). Concretely,
\[
\begin{aligned}
Z &= 2\,C_1 + 3\,C_2 + \varepsilon_Z,
\quad
X = 3\,C_1 + \varepsilon_X,\\
T &= \beta_{C_1\to T}\,C_1 + \beta_{X\to T}\,X + \beta_{Z\to T}\,Z + \varepsilon_T,\\
P &= T + \varepsilon_P,\quad Y= 2\,T + \varepsilon_Y,
\end{aligned}
\]
with noise terms \(\varepsilon_\bullet\sim\mathcal N(0,1)\) independent.
In the \emph{source} domain we draw each context variable \(C_i\sim\mathcal N(0,1)\); in the \emph{target} domain we introduce two forms of shift:
\begin{itemize}
  \item \textbf{Covariate shift} by shifting the marginal of \(C_2\) (e.g.\ \(C_2\sim\mathcal N(\mu_{\rm tgt},\sigma_{\rm tgt}^2)\)),
  \item \textbf{Local mechanism shift at \(T\)} by changing the conditional mechanism \(P(T\mid \pa(T))\), e.g.\ via a shift in coefficients and an intercept term:
  \[
    T \;=\; \tilde\beta_{C_1\to T}\,C_1 + \tilde\beta_{X\to T}\,X + \tilde\beta_{Z\to T}\,Z + b_{\rm tgt} + \tilde\varepsilon_T,
    \qquad \tilde\varepsilon_T\sim\mathcal N(0,\tilde\Delta_T).
  \]
\end{itemize}
Although \(T\) is completely unobserved in the target domain, it has observed descendants \((P,Y)\); under the invariant DAG structure, information about \(T\) is still present in the joint distribution of the observed variables and can be exploited by EM.
We compare three methods for imputing \(T\) under these shifts:
\((a)\) a fit-on-source Bayesian network baseline,
\((b)\) a Kiiveri-style EM baseline \cite{Kiiveri1987} treating \(T\) as latent, and
\((c)\) our proposed DAG-aware first-order EM algorithm (Sect.~\ref{sec:method}).
Subsequent results appear in Table~\ref{tab:results} and Fig.~\ref{fig:scatter_results}.

\begin{table}[!ht]
  \centering
  \caption{Average error metrics under covariate shift and local mechanism shift at $T$ for the motivating example.}
  \label{tab:results}
  \resizebox{.75\columnwidth}{!}{%
  \begin{tabular}{lcccc}
    \toprule
    \textbf{Shift scenario} & \textbf{Method} & \textbf{MAE} & \textbf{RMSE} & \(\mathbf{R}^2\) \\
    \midrule
    \multirow{3}{*}{Covariate shift}
      & Baseline (Fit-on-Source) & 0.7962 & 1.0137 & 0.9981 \\
      & Kiiveri EM               & 45.4529 & 45.4554 & --2.8735 \\
      & 1st-order EM             & \textbf{0.3331}  & \textbf{0.4273}  & \textbf{0.9997} \\
    \midrule
    \multirow{3}{*}{Mechanism shift at $T$}
      & Baseline (Fit-on-Source) & 6.1528  & 6.4643  & 0.9471 \\
      & Kiiveri EM               & 71.8909 & 73.2513 & --5.7872 \\
      & 1st-order EM             & \textbf{1.0386}  & \textbf{1.1228}  & \textbf{0.9984} \\
    \bottomrule
  \end{tabular}%
  }
\end{table}

\noindent\textbf{Discussion.}
Under covariate shift (shifting \(C_2\) only), the fit-on-source baseline degrades mildly, whereas under a local mechanism shift at \(T\) it can deteriorate substantially.
A Kiiveri-style EM procedure \cite{Kiiveri1987} is a natural baseline for Gaussian missing-data problems; however, without careful numerical safeguards and model-specific regularization, EM can converge to degenerate or poor local solutions in latent-variable likelihoods, especially under pronounced shift.
In contrast, our DAG-aware first-order EM initializes from the source estimate and uses the known causal structure to combine source and target information, yielding stable improvements even when \(T\) is entirely missing in the target domain.
On the theory side, classic results such as \cite{Balakrishnan2017EM} establish \emph{local} geometric convergence and finite-sample error bounds for (gradient) EM in canonical settings (e.g.\ Gaussian mixtures and regression with missing covariates).
Our contribution is to develop an analogous analysis \emph{in the Gaussian DAG parameterization}, showing that under standard local strong-concavity/smoothness and a BWY-style \emph{gradient stability} (bounded missing-information) condition, the resulting first-order EM operator is locally contractive and converges geometrically up to a statistical precision neighborhood.

\begin{figure}[ht]
  \centering
  \subfloat[Baseline (BN)\label{fig:bn-cov}]{
    \includegraphics[width=0.32\linewidth]{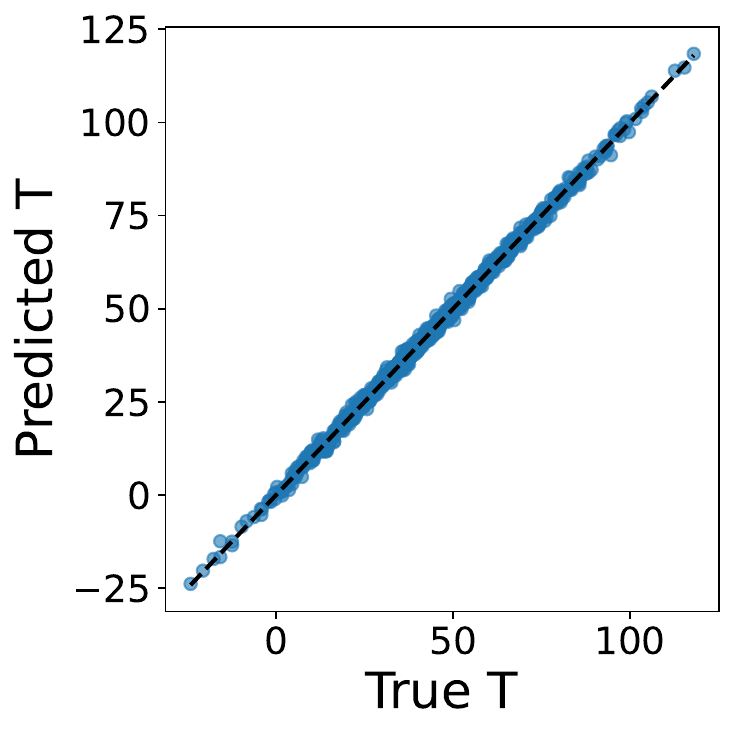}
  }%
  \subfloat[Kiiveri EM\label{fig:kiiv-cov}]{
    \includegraphics[width=0.32\linewidth]{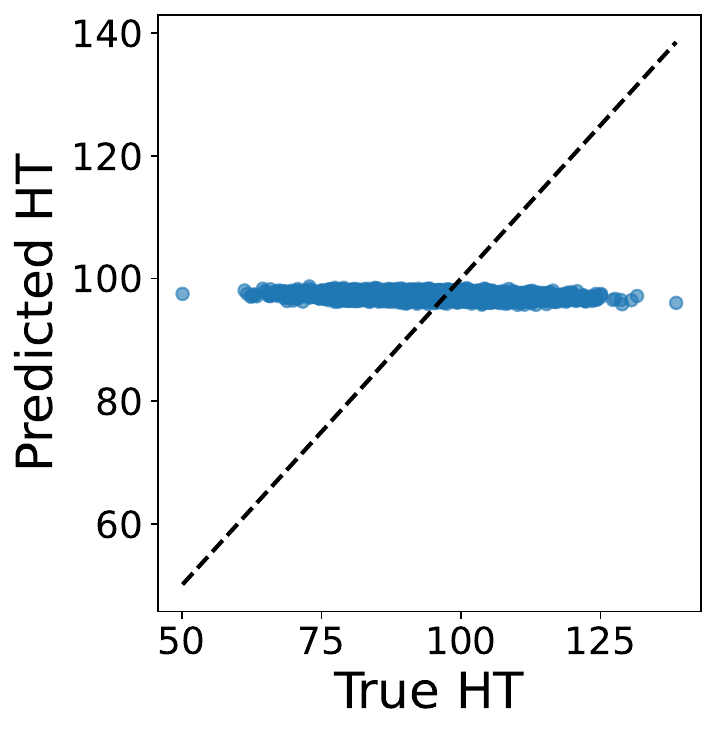}
  }%
  \subfloat[1st-order EM\label{fig:fo-cov}]{
    \includegraphics[width=0.32\linewidth]{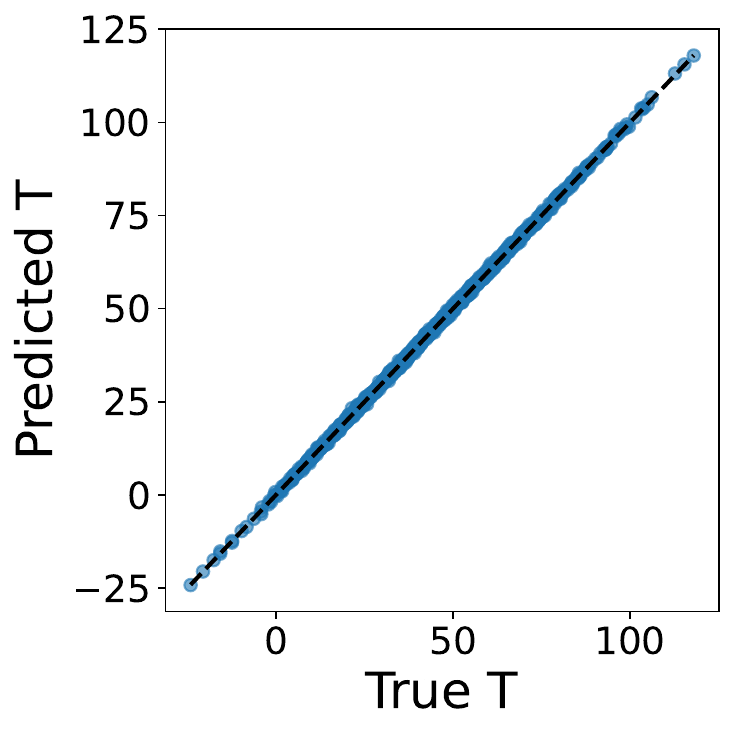}
  }

  \vspace{1em}

  \subfloat[Baseline (BN)\label{fig:bn-mech}]{
    \includegraphics[width=0.32\linewidth]{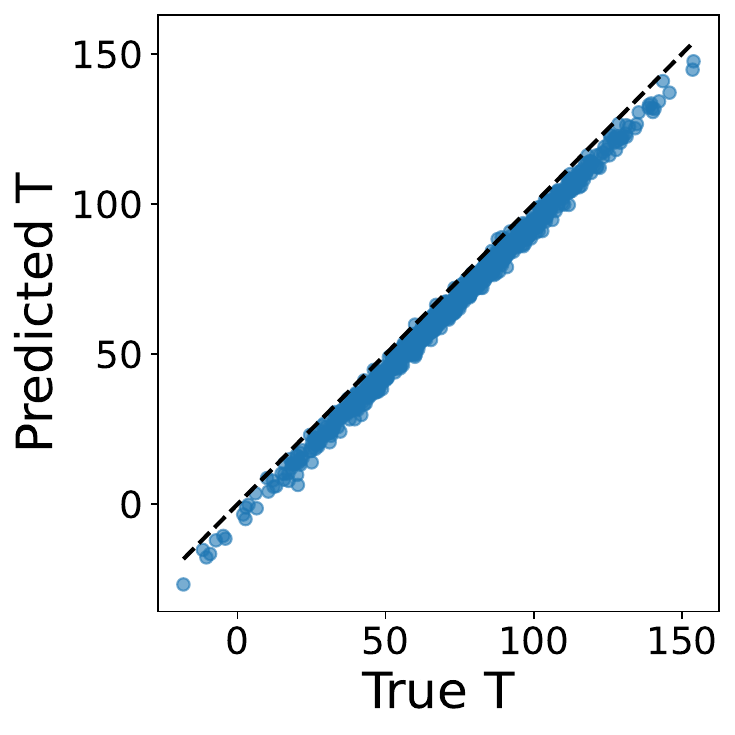}
  }%
  \subfloat[Kiiveri EM\label{fig:kiiv-mech}]{
    \includegraphics[width=0.32\linewidth]{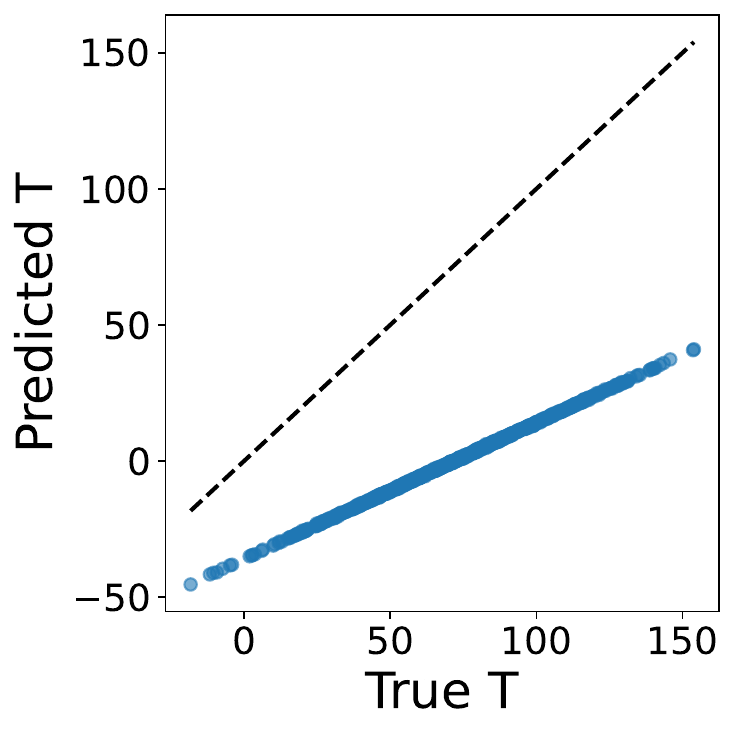}
  }%
  \subfloat[1st-order EM\label{fig:fo-mech}]{
    \includegraphics[width=0.32\linewidth]{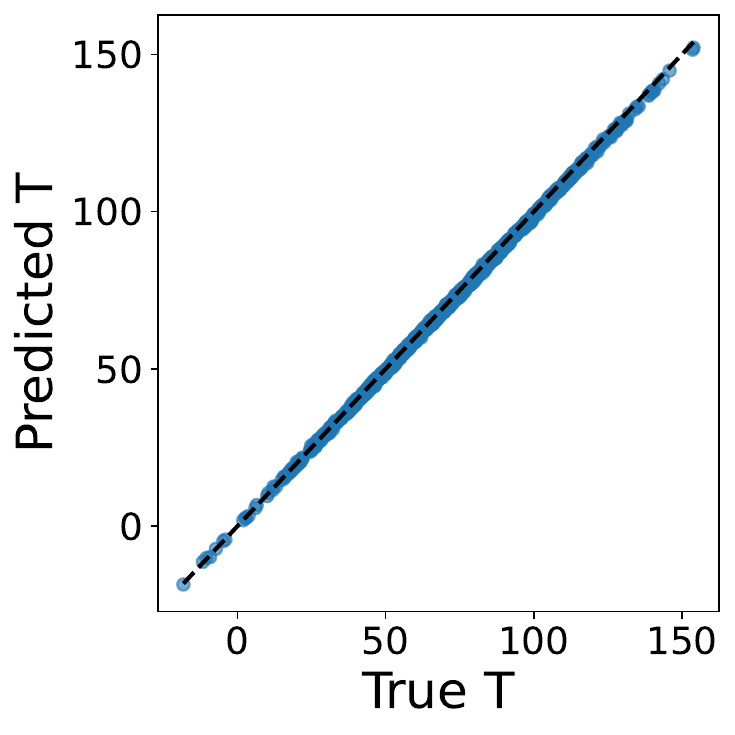}
  }

  \caption{True vs.\ predicted (imputed) \(T\) under covariate shift (top row) and a local mechanism shift at \(T\) (bottom row).
  The DAG-aware 1st-order EM achieves near-perfect recovery of \(T\) in this example, while the fit-on-source baseline degrades under mechanism shift.}
  \label{fig:scatter_results}
\end{figure}

\begin{figure}[!ht]
  \centering
  \begin{minipage}[m]{0.3\columnwidth}
    \vspace{0pt}\centering
    \resizebox{.75\linewidth}{!}{%
      \begin{tikzpicture}[
        node distance=0.5cm,
        every node/.style={draw, circle, minimum size=7mm, font=\small},
        >={Stealth[round]}, edge/.style={->, thick}
      ]
        \node (C1) {C1};
        \node[right=of C1] (C2) {C2};
        \node[below=1cm of $(C1)!0.5!(C2)$] (Z) {Z};
        \node[left= of Z]  (X) {X};
        \node[below=1cm of $(X)!0.5!(Z)$, ultra thick] (T) {T};
        \node[below left=of T]  (P) {P};
        \node[below right=of T] (Y) {Y};
        \draw[edge] (C1) -- (Z);
        \draw[edge] (C2) -- (Z);
        \draw[edge] (C1) -- (X);
        \draw[edge] (C1) -- (T);
        \draw[edge] (X)  -- (T);
        \draw[edge] (Z)  -- (T);
        \draw[edge] (T)  -- (P);
        \draw[edge] (T)  -- (Y);
      \end{tikzpicture}
    }%
  \end{minipage}\hfill
  \begin{minipage}[m]{0.65\columnwidth}
    \vspace{0pt}
    \captionof{figure}{\small Causal DAG underlying the shared data-generating process
    across source and target domains for the motivating example.
    \textbf{C$_1$}, \textbf{C$_2$}: context; \textbf{Z}, \textbf{X}: intermediates;
    \textbf{T}: target (systematically unobserved in target domain); \textbf{P}, \textbf{Y}: downstream.
    The causal structure is invariant across domains.}
    \label{fig:motivEXDAG}
  \end{minipage}
\end{figure}
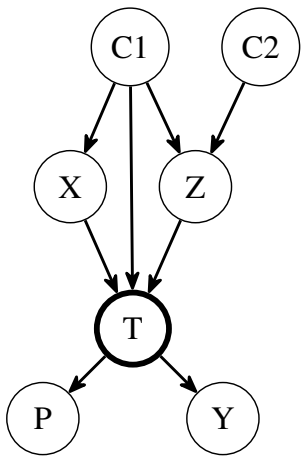

\noindent\textbf{Summary of Contributions.}
Our work provides:
\begin{itemize}[leftmargin=*]
  \item \textbf{Population EM operator in the Gaussian DAG parameterization} (Section~\ref{subsec:pop_em}):
  we characterize the population-level EM update as an operator on the DAG parameters (edge coefficients and noise variances), induced by exact conditional moments of the latent target given observed variables.
  \item \textbf{First-order (partial) M-step via gradient EM in parameter space} (Section~\ref{subsec:gradient_em}):
  we replace the $O(p^3)$ generalized least-squares (GLS) M-step with a single projected gradient step on the updatable parameter block, reducing the per-iteration cost to $O(|E|)$--$O(p^2)$ in sparse graphs (depending on the required linear solves), while maintaining ascent in the EM surrogate objective for an appropriate step size.
  \item \textbf{Domain-adaptive EM via freezing invariant mechanisms} (Section~\ref{subsec:sample_em}):
  we develop an EM routine that freezes source-invariant mechanisms and re-estimates only those conditional distributions directly affected by the shift (e.g.\ the mechanism at $T$), yielding a scalable procedure for high-dimensional DAGs.
  \item \textbf{BWY-style local geometric convergence and finite-sample error bounds} (Sections~\ref{subsec:pop_contraction}--\ref{subsec:sample_concentration}):
  building on \cite{Balakrishnan2017EM}, we prove local contraction of the population first-order EM operator under local strong-concavity/smoothness and a gradient-stability (bounded missing-information) condition, and extend the result to high-probability sample-level bounds that translate into guarantees on the induced imputation error.
\end{itemize}

\section{Related Work}

\paragraph{The Classical EM Algorithm and Early Variants.}
Dempster, Laird, and Rubin established the EM algorithm as a general-purpose method for maximum-likelihood estimation with incomplete data, proving that each iteration does not decrease the observed-data likelihood and that EM converges to a stationary point under mild conditions \cite{Dempster1977}. Louis \cite{Louis1982finding} derived the missing-information identity, decomposing the observed-data Hessian into a complete-data term plus a missing-information correction, thereby clarifying how local curvature and the fraction of missing information govern EM's convergence behavior. Wu \cite{Wu1983} analyzed EM as a generalized ascent method and proved convergence to a stationary point, with an asymptotic linear rate characterized by the local Jacobian of the EM map. Subsequent surveys and monographs (e.g., \cite{McLachlanKrishnan1997}) compile these classical guarantees and discuss practical issues such as initialization and local optima.

\paragraph{EM for Covariance Structure and Structural Equation Models.}
Gaussian structural equation models (SEMs) with latent or missing variables naturally fit the EM framework. Early covariance-structure estimation work in psychometrics and SEMs includes iterative procedures described by J\"oreskog and S\"orbom \cite{JoreskogSorbom1981} and McArdle and McDonald \cite{McArdleMcDonald1984}. Kiiveri \cite{Kiiveri1987} systematized the incomplete-data viewpoint for Gaussian recursive models by providing explicit expressions for the conditional moments $E[XX^{\top}\mid X_{\mathrm{obs}}]$ along with the corresponding score and observed information, enabling efficient EM- and Newton-type updates for fitting recursive and factor-analytic models with missing entries and latent constructs.

\paragraph{EM Variants, First-Order EM, and Tutorial Overviews.}
A broad family of EM variants improves computational efficiency or convergence speed. Meng and Rubin \cite{MengRubin1993} introduced ECM, which replaces a difficult M-step with simpler conditional maximizations; Liu and Rubin \cite{liu1994ecme} developed ECME, which accelerates convergence by maximizing the observed-data likelihood in selected blocks when convenient. Parameter expansion methods such as PX-EM \cite{MengVanDyk1997} improve curvature and can speed convergence. Tutorials such as \cite{Roche2012} survey GEM/ECM/ECME, Monte Carlo EM, and stochastic EM.
More recently, Balakrishnan \emph{et al.} \cite{Balakrishnan2017EM} formalized \emph{first-order} (gradient) EM, replacing the exact M-step by a single gradient-type update on the EM surrogate objective; this can substantially reduce per-iteration cost when the exact M-step is expensive.

\paragraph{Modern Statistical Guarantees: Population vs.\ Sample-Level Analyses.}
Balakrishnan, Wainwright, and Yu \cite{Balakrishnan2017EM} developed a unified framework in which the \emph{population} EM/gradient-EM operator is shown to be \emph{locally contractive} in a basin around a population optimum, under standard local regularity assumptions together with a BWY-style \emph{gradient stability} (bounded missing-information) condition. They also derived nonasymptotic, high-probability bounds showing that the \emph{sample} EM/gradient-EM iterates converge geometrically up to a statistical precision term controlled by uniform deviations (typically scaling as $O(\sqrt{d/n})$ in low-dimensional settings). Related developments in high-dimensional regimes include truncated or regularized EM analyses; for example, Wang, Xu, and Ravikumar \cite{WangXuRavikumar2019} studied truncated EM for high-dimensional Gaussian mixtures and established near-minimax rates under sparsity.

\paragraph{Other Notable EM-Related Advances.}
Extensions to large-scale and streaming settings include online EM \cite{CappeMoulinesRyden2005}, which uses stochastic approximation in place of full E-steps, and mini-batch stochastic EM variants \cite{CaffoJank2005}. Monte Carlo EM and stochastic EM \cite{CeleuxDiebolt1986,CaffoJank2005} approximate intractable E-steps via Monte Carlo or MCMC. Variational EM extends EM-style updates to approximate Bayesian inference by optimizing a lower bound on the marginal likelihood \cite{Bishop2006}. Collectively, these advances enable EM-like learning in settings ranging from massive datasets to complex latent-variable models.

\section{Problem Statement}\label{sec:problem}

We formalize the domain-adaptation task and specify the Gaussian DAG model and shift classes under which a BWY-style (local) geometric convergence analysis is meaningful.

\paragraph{Data and missingness.}
Let \(X=(X_1,\dots,X_p)\) be \(p\) random variables whose causal structure is a known DAG \(\mathcal{G}\) (shared across domains). One coordinate \(T=X_t\) is the \emph{designated target variable}: it is fully observed in the source domain but \emph{systematically missing} in the target domain. We write
\[
  \mathcal{D}_{\mathrm{s}}
  \;=\;
  \bigl\{\,X_{\mathrm{s}}^{(i)}\bigr\}_{i=1}^{N_{\mathrm{s}}},
  \qquad
  X_{\mathrm{s}}^{(i)}=\bigl(X_{1,\mathrm{s}}^{(i)},\dots,X_{p,\mathrm{s}}^{(i)}\bigr),
\]
for the complete source samples, and
\[
  \mathcal{D}_{\mathrm{t}}^{\mathrm{obs}}
  \;=\;
  \bigl\{\,X_{\mathrm{t},-t}^{(j)}\bigr\}_{j=1}^{N_{\mathrm{t}}},
  \qquad
  X_{\mathrm{t},-t}^{(j)}=
  \bigl(X_{1,\mathrm{t}}^{(j)},\dots,X_{t-1,\mathrm{t}}^{(j)},X_{t+1,\mathrm{t}}^{(j)},\dots,X_{p,\mathrm{t}}^{(j)}\bigr),
\]
for the observed target samples (with \(T\) missing), where \(X_{-t}\) denotes all coordinates except \(X_t\).

\paragraph{Gaussian DAG / SEM model.}
We assume \(X\) follows a linear-Gaussian SEM that is Markov with respect to \(\mathcal{G}\):
\[
  X_k \;=\; \sum_{j\in \pa(k)} \theta_{kj}\,X_j \;+\; \varepsilon_k,
  \qquad
  \varepsilon_k\sim\mathcal{N}(0,\sigma_k^2),\;\;
  \varepsilon_k \perp \varepsilon_\ell\;(k\neq \ell).
\]
Let \(B\in\mathbb{R}^{p\times p}\) denote the (strictly) lower-triangular coefficient matrix in a topological ordering, where
\(B_{kj}=\theta_{kj}\) if \(j\in\pa(k)\) and \(B_{kj}=0\) otherwise, and let \(\Delta:=(\sigma_1^2,\dots,\sigma_p^2)^\top\).
Define the structural matrix \(S := I - B\). Then the implied covariance is
\[
  \Sigma(\theta,\Delta)
  \;=\;
  \Bigl(S^\top \diag(\Delta)^{-1} S\Bigr)^{-1}.
\]
We treat \(\mathcal{G}\) as known (e.g., learned and validated using causal discovery and interventional refinement \cite{Rahman2021Accelerating,li2023causal}); given \(\mathcal{G}\), the SEM parameters are identifiable under standard regularity conditions for linear SEMs \cite{chen2014graphical}.

\paragraph{Shift classes and invariances.}
Source and target distributions may differ, but the DAG structure \(\mathcal{G}\) is invariant across domains. We consider:
\begin{itemize}[leftmargin=*]
  \item \textbf{Covariate shift:} the marginal distribution of context variables (and hence of \(X_{-t}\)) may change between domains, while the conditional mechanisms \(P(X_k\mid X_{\pa(k)})\) remain invariant.
  \item \textbf{Local mechanism shift at \(T\):} the target domain may modify only the conditional mechanism generating \(T\), i.e.,
  \(P_{\mathrm{tgt}}(T\mid X_{\pa(t)})\neq P_{\mathrm{s}}(T\mid X_{\pa(t)})\),
  while all other conditionals remain invariant.\footnote{%
  In a linear-Gaussian SEM, changing only the noise variance of \(T\) does not affect the conditional mean \(\mathbb{E}[T\mid X_{\pa(t)}]\). Therefore, improvements in mean-based imputation under ``target shift'' require a mechanism shift in \(P(T\mid X_{\pa(t)})\) (e.g., coefficient/intercept changes), which is the setting we consider.}
\end{itemize}
The availability of observed descendants of \(T\) in \(X_{-t}\) is what makes adaptation possible when \(T\) is systematically missing: changes in the mechanism at \(T\) can still be detected through their effect on the joint distribution of observed variables.

\medskip
\noindent\textbf{Domain Adaptation Task.}
Given \(\mathcal{D}_{\mathrm{s}}\), \(\mathcal{D}_{\mathrm{t}}^{\mathrm{obs}}\), and \(\mathcal{G}\), our goal is to impute the missing target values \(\{X_{t,\mathrm{t}}^{(j)}\}_{j=1}^{N_{\mathrm{t}}}\). Specifically, we compute
\[
  \widehat{X}_{t,\mathrm{t}}^{(j)}
  \;=\;
  \mathbb{E}_{\hat\theta_{\mathrm{tgt}}}\!\left[T \,\middle|\, X_{-t}=X_{\mathrm{t},-t}^{(j)}\right],
  \qquad j=1,\dots,N_{\mathrm{t}},
\]
where \(\hat\theta_{\mathrm{tgt}}\) denotes the (shift-adapted) target-domain SEM parameters learned by combining source information with the target observed-data likelihood. In our linear-Gaussian setting, this conditional expectation is available in closed form once the target parameters are estimated.

\section{Methodology and Theoretical Results}\label{sec:method}
In this section, we first present the population EM operator in the infinite-sample (population; no sampling error) setting, then describe a first-order (gradient) M--step, and finally give the sample-level domain-adaptive EM algorithm that jointly uses source and target data under domain shift. After outlining the algorithm, we develop the theoretical guarantees: population-level contraction, curvature decomposition, and sample-level error bounds.

\subsection{Population-EM Operator under a Known Causal DAG}
\label{subsec:pop_em}

To address the challenge of shifting mechanisms, we formulate the population EM operator \emph{directly in the Gaussian DAG parameter space}. Throughout, the DAG $\mathcal{G}$ is fixed and known, and only $T=X_t$ is systematically missing in the target domain.

\paragraph{Gaussian SEM parameterization.}
We write the node-wise SEM coefficients as $\theta_{kj}$:
\[
X_k \;=\; \sum_{j\in \pa(k)} \theta_{kj}\, X_j \;+\; \varepsilon_k,\qquad
\varepsilon_k \sim \mathcal{N}(0,\sigma_k^2),
\]
and collect them into a coefficient matrix $B\in\mathbb{R}^{p\times p}$ that is \emph{strictly lower-triangular}
under a topological ordering (parents precede children), with
$B_{kj}=\theta_{kj}$ if $j\in\pa(k)$ and $B_{kj}=0$ otherwise.
Let $\Delta=(\sigma_1^2,\dots,\sigma_p^2)^\top$ and write $\vartheta:=(B,\Delta)$.
Define $S:=I-B$. The implied covariance is
\[
\Sigma(\vartheta)\;=\;\bigl(S^\top \diag(\Delta)^{-1} S\bigr)^{-1}.
\]

\medskip
\noindent\textbf{Mean / intercept convention.}
For clarity, we either (i) assume variables are centered within each domain so that the SEM has zero intercepts and $m(\vartheta)=0$, or (ii) include an intercept by augmenting $X_{\pa(k)}$ with a constant $1$ (and then $m(\vartheta)$ is handled implicitly by this augmentation). When we write conditioning formulas with an explicit mean $m(\vartheta)$ below, it should be read as $m(\vartheta)=0$ under (i).

The complete-data log-likelihood factorizes by nodes. Since only $T$ is missing and we restrict adaptation to the $t$-mechanism, the only nontrivial EM update concerns the local parameters of node $t$.

\paragraph{The imputation step (E-step).}
Given a parameter iterate $\vartheta^{(r)}$, the conditional distribution of the missing target $T$ given the observed $X_{-t}$ is Gaussian $\mathcal{N}(\mu_t^{(r)}(x_{-t}), V_t^{(r)})$, where
\begin{align}
    \mu_{t}^{(r)}(x_{-t}) \;&=\; \mathbb{E}_{\vartheta^{(r)}}[T \mid X_{-t}=x_{-t}], \label{eq:muV_param} \\
    V_{t}^{(r)} \;&=\; \mathrm{Var}_{\vartheta^{(r)}}(T \mid X_{-t}). \notag
\end{align}
In a multivariate Gaussian, $V_t^{(r)}$ depends on $\vartheta^{(r)}$ but not on the realized value $x_{-t}$.

\textbf{Remark (conditioning on all observed variables).}
Although the structural equation for $T$ uses only $X_{\pa(t)}$, the imputation step conditions on the full observed vector $X_{-t}$:
$\mu_t^{(r)}(x_{-t})=\mathbb{E}_{\vartheta^{(r)}}[T\mid X_{-t}=x_{-t}]$.
This is beneficial when $T$ has observed descendants and/or when the joint distribution shifts across domains, because variables in
$X_{-t}\setminus X_{\pa(t)}$ can carry additional information about $T$ through the DAG-implied Gaussian dependence structure.

\paragraph{The parameter update (M-step).}
Let $\phi_t := (b_t,\sigma_t^2)$ denote the local mechanism parameters for $T$ in the \emph{natural} parameterization,
where $b_t \in \mathbb{R}^{|\pa(t)|}$ (or $\mathbb{R}^{|\pa(t)|+1}$ if an intercept is included by augmenting $X_{\pa(t)}$ with a constant $1$).
Let $\mathbb{E}_t[\cdot]$ denote expectation with respect to the \emph{target-domain marginal} of $X_{-t}$.
The population M-step for node $T$ reduces to least squares based on imputed moments:
\begin{equation}
\label{eq:pop_update_T}
\begin{split}
    b_t^{(r+1)} \;&=\; \Bigl(\mathbb{E}_t[\,X_{\pa(t)}X_{\pa(t)}^\top\,]\Bigr)^{-1}
    \mathbb{E}_t\!\left[\,X_{\pa(t)}\,\mu_t^{(r)}(X_{-t})\,\right], \\[1ex]
    (\sigma_t^2)^{(r+1)} \;&=\; \mathbb{E}_t\!\left[\, V_t^{(r)} \;+\; \Bigl(\mu_t^{(r)}(X_{-t}) - b_t^{(r+1)\top}X_{\pa(t)}\Bigr)^2 \right].
\end{split}
\end{equation}
We assume $\mathbb{E}_t[X_{\pa(t)}X_{\pa(t)}^\top]$ is positive definite (or the intercept-augmented analogue), ensuring the update is well-defined.

\paragraph{Population operator (restricted to updatable mechanisms).}
Collecting the node-wise maximizers yields the population EM mapping
$\vartheta^{(r+1)} = F(\vartheta^{(r)})$.
In our domain-adaptation setting, only a subset of mechanisms is updated.
In particular, when updating only the target-node mechanism, we define the restricted population operator
\[
\phi_t^{(r+1)} = F_t(\phi_t^{(r)};\,\vartheta_{\setminus t}),
\]
where $\phi_t:=(b_t,\sigma_t^2)$ denotes the local parameters of node $t$ and
$\vartheta_{\setminus t}$ denotes all frozen (source-invariant) SEM parameters held fixed during the update.

\medskip
\noindent\textbf{Log-variance reparameterization for theory.}
For the contraction and curvature analysis below, it is convenient to reparameterize
$\sigma_t^2$ by $\alpha_t:=\log\sigma_t^2$ and work with $\theta_t:=(b_t,\alpha_t)$.
This is a smooth one-to-one change of variables ($\sigma_t^2=e^{\alpha_t}$), so it induces an equivalent operator
\[
\theta_t^{(r+1)} = \widetilde F_t(\theta_t^{(r)};\,\vartheta_{\setminus t}),
\]
obtained by expressing the same update in the $(b_t,\alpha_t)$ coordinates.
We state the closed-form update in \eqref{eq:pop_update_T} using $(b_t,\sigma_t^2)$, while theoretical statements use $(b_t,\alpha_t)$
where curvature in the variance coordinate is better behaved.

\subsection{First-Order (Partial) M--Step via Gradient-EM}\label{subsec:gradient_em}

This subsection develops our \emph{first-order} M-step for Gaussian SEMs with a known DAG. Rather than maximizing the EM surrogate exactly, we take a single projected gradient-ascent step on the active mechanism parameters, yielding a valid \emph{generalized EM} (GEM) procedure: with a suitable step size, each iteration provably increases the EM surrogate $\widehat Q(\cdot\mid \vartheta^{(r)})$.

\paragraph{Finite-sample E-step and EM surrogate.}
Let $n:=N_{\mathrm{t}}$ and let $x_{-t}^{(i)}$ denote the observed coordinates in the target domain. Given a current iterate $\vartheta^{(r)}=(B^{(r)},\Delta^{(r)})$, the E-step computes conditional moments of the missing $T\mid X_{-t}$ under $\vartheta^{(r)}$, and forms the empirical EM surrogate
\begin{equation}\label{eq:Qhat_param}
\widehat{Q}\bigl(\vartheta \mid \vartheta^{(r)}\bigr)
\;:=\;
-\frac{1}{n}\sum_{i=1}^n
\mathbb{E}_{\vartheta^{(r)}}\!\left[\,
\ell_{\mathrm{comp}}\!\left(X^{(i)};\vartheta\right)
\,\middle|\, X^{(i)}_{-t}=x^{(i)}_{-t}
\right],
\end{equation}
where $\ell_{\mathrm{comp}}$ is the complete-data negative log-likelihood.
We \emph{maximize} $\widehat Q(\cdot\mid\vartheta^{(r)})$ (equivalently, minimize the expected complete-data NLL).
Since $\ell_{\mathrm{comp}}$ is the complete-data \emph{negative} log-likelihood, the quantity $\widehat Q(\vartheta\mid\vartheta^{(r)})$
is (up to an additive constant) the empirical expected complete-data \emph{log}-likelihood. Hence, for fixed $(\sigma_t^2)^{(r)}$,
$\widehat Q(\cdot\mid\vartheta^{(r)})$ is a concave quadratic function of $b_t$.

\paragraph{Active block and imputed sufficient statistics.}
We freeze source-invariant mechanisms and update only the shifted mechanism(s). For clarity, we present the update for the conditional at the missing target node $T$.
Define the empirical second moment of the observed parents
\[
\widehat{M}_{\pa(t)}
\;:=\;
\frac{1}{n}\sum_{i=1}^n
x_{\pa(t)}^{(i)}x_{\pa(t)}^{(i)\top},
\]
which is iteration-invariant since $X_{\pa(t)}\subseteq X_{-t}$ is observed.
Define the imputed cross-moment
\begin{equation}\label{eq:imputed_stats}
\widehat{v}^{(r)}_{t}
\;:=\;
\frac{1}{n}\sum_{i=1}^n
\mathbb{E}_{\vartheta^{(r)}}\!\left[
X_{\pa(t)}^{(i)}\,T^{(i)}\,\middle|\,X_{-t}^{(i)}=x_{-t}^{(i)}
\right]
\;=\;
\frac{1}{n}\sum_{i=1}^n
x_{\pa(t)}^{(i)}\,\mu_t^{(r)}(x_{-t}^{(i)}).
\end{equation}

\paragraph{Gradient for the target coefficients.}
Let $b_t\in\mathbb{R}^{|\pa(t)|}$ denote the coefficient vector for the parents of $T$, and let $\sigma_t^2$ denote its noise variance.
Viewing $\widehat Q(\cdot\mid\vartheta^{(r)})$ as a function of $b_t$ with $\sigma_t^2$ fixed at $(\sigma_t^2)^{(r)}$, differentiation yields
\begin{equation}\label{eq:grad_bt}
\nabla_{b_t}\widehat{Q}\bigl(\vartheta \mid \vartheta^{(r)}\bigr)
\;=\;
\frac{1}{(\sigma_t^2)^{(r)}}\Bigl(\widehat{v}^{(r)}_{t}-\widehat{M}_{\pa(t)}\,b_t\Bigr).
\end{equation}
We then perform a single gradient-ascent step evaluated at $b_t=b_t^{(r)}$:
\begin{equation}\label{eq:grad_em_update_bt}
b_t^{(r+1)}
\;=\;
b_t^{(r)} + \frac{\eta_r}{(\sigma_t^2)^{(r)}}\Bigl(\widehat{v}^{(r)}_{t}-\widehat{M}_{\pa(t)}\,b_t^{(r)}\Bigr),
\end{equation}
followed by projection onto the known sparsity pattern (trivial here since $b_t$ only indexes $\pa(t)$).

\begin{lemma}[GEM ascent for the one-step update]\label{lem:gem_ascent}
Fix $\vartheta^{(r)}$ and consider $\widehat{Q}(\cdot\mid \vartheta^{(r)})$ as a function of $b_t$ with $\sigma_t^2$ fixed at $(\sigma_t^2)^{(r)}$.
Then $\widehat{Q}(\cdot\mid \vartheta^{(r)})$ is concave and gradient-Lipschitz (i.e., $L^{(r)}$-smooth) in $b_t$, with
\(
L^{(r)} \;=\; \lambda_{\max}\!\bigl(\widehat{M}_{\pa(t)}\bigr)\,/\,(\sigma_t^2)^{(r)}.
\)
Moreover, if $0<\eta_r \le 2/L^{(r)}$, the update \eqref{eq:grad_em_update_bt} (gradient ascent on a concave $L^{(r)}$-smooth function) satisfies
\[
\widehat{Q}\!\left(b_t^{(r+1)} \mid \vartheta^{(r)}\right)
\;\ge\;
\widehat{Q}\!\left(b_t^{(r)} \mid \vartheta^{(r)}\right),
\]
and therefore constitutes a GEM step \cite{Dempster1977,Wu1983}.
\end{lemma}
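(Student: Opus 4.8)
The plan is to exploit that, with $\sigma_t^2$ frozen at $(\sigma_t^2)^{(r)}$, the partial gradient in \eqref{eq:grad_bt} is \emph{affine} in $b_t$, so that $b_t\mapsto\widehat Q(\cdot\mid\vartheta^{(r)})$ is an exact quadratic. Differentiating \eqref{eq:grad_bt} once more gives the constant Hessian
\[
\nabla^2_{b_t}\widehat{Q}\bigl(\vartheta\mid\vartheta^{(r)}\bigr)
\;=\;
-\frac{1}{(\sigma_t^2)^{(r)}}\,\widehat{M}_{\pa(t)},
\]
and since $\widehat{M}_{\pa(t)}=\tfrac1n\sum_i x_{\pa(t)}^{(i)}x_{\pa(t)}^{(i)\top}$ is a sum of rank-one outer products it is positive semidefinite, so the Hessian is negative semidefinite and $\widehat Q(\cdot\mid\vartheta^{(r)})$ is concave in $b_t$. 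For a quadratic the gradient-Lipschitz constant equals the operator norm of this (constant) Hessian, whence
\[
\bigl\|\nabla^2_{b_t}\widehat{Q}\bigr\|_{\op}
\;=\;
\frac{\lambda_{\max}\!\bigl(\widehat{M}_{\pa(t)}\bigr)}{(\sigma_t^2)^{(r)}}
\;=\;
L^{(r)},
\]
settling both concavity and the asserted value of $L^{(r)}$.

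For the ascent guarantee I would use the \emph{exact} second-order expansion of the quadratic rather than the inequality form of the descent lemma. Writing $g:=\nabla_{b_t}\widehat{Q}(b_t^{(r)}\mid\vartheta^{(r)})$, the update \eqref{eq:grad_em_update_bt} is precisely $b_t^{(r+1)}=b_t^{(r)}+\eta_r g$, and exactness of the Taylor expansion yields
\[
\widehat{Q}\!\left(b_t^{(r+1)}\mid\vartheta^{(r)}\right)-\widehat{Q}\!\left(b_t^{(r)}\mid\vartheta^{(r)}\right)
\;=\;
\eta_r\|g\|^2-\frac{\eta_r^2}{2(\sigma_t^2)^{(r)}}\,g^\top\widehat{M}_{\pa(t)}\,g.
\]
Bounding $g^\top\widehat{M}_{\pa(t)}\,g\le\lambda_{\max}(\widehat{M}_{\pa(t)})\|g\|^2=(\sigma_t^2)^{(r)}L^{(r)}\|g\|^2$ collapses the right-hand side to $\eta_r\bigl(1-\tfrac12\eta_r L^{(r)}\bigr)\|g\|^2$, which is nonnegative exactly when $0<\eta_r\le2/L^{(r)}$. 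A block update that does not decrease the surrogate $\widehat Q(\cdot\mid\vartheta^{(r)})$ is by definition a generalized EM step, so the claim follows and the procedure inherits the usual observed-data likelihood monotonicity of GEM \cite{Dempster1977,Wu1983}.

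I do not expect a genuine obstacle, since the statement essentially repackages the standard ascent lemma for a concave quadratic. The two points needing care are (i) confirming that the partial objective is \emph{exactly} quadratic in $b_t$ once $\sigma_t^2$ and the frozen parameters are held fixed---this is what makes the Taylor identity exact and lets me avoid the inequality version of the descent lemma, so I must check that no cross-terms with $\sigma_t^2$ or $\vartheta_{\setminus t}$ survive under the freezing---and (ii) identifying the gradient-Lipschitz constant with $\lambda_{\max}(\widehat{M}_{\pa(t)})/(\sigma_t^2)^{(r)}$ rather than some other matrix norm; both reduce to writing the Hessian out explicitly.
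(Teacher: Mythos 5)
Your proposal is correct and follows essentially the same route as the paper: compute the constant Hessian $-\widehat{M}_{\pa(t)}/(\sigma_t^2)^{(r)}$ from the quadratic surrogate, read off concavity and $L^{(r)}=\lambda_{\max}(\widehat{M}_{\pa(t)})/(\sigma_t^2)^{(r)}$ as its operator norm, and conclude ascent for $0<\eta_r\le 2/L^{(r)}$. The only (immaterial) difference is that you use the exact second-order Taylor identity of the quadratic with the bound $g^\top\widehat{M}_{\pa(t)}g\le\lambda_{\max}(\widehat{M}_{\pa(t)})\|g\|^2$, whereas the paper invokes the standard smoothness inequality for concave $L$-smooth functions; both yield the same factor $\eta_r(1-\tfrac12\eta_r L^{(r)})\|g\|^2\ge 0$.
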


\paragraph{Variance update (optional; closed form).}
If the mechanism shift at $T$ also affects $\sigma_t^2$, one may update it in closed form after updating $b_t$.
Specifically, using the same imputed moments computed under $\vartheta^{(r)}$ (an ECM/GEM-style update),
\begin{equation}\label{eq:sigma_update}
(\sigma_t^2)^{(r+1)}
=
\frac{1}{n}\sum_{i=1}^n
\left[
V_t^{(r)}
+\Bigl(\mu_t^{(r)}(x_{-t}^{(i)}) - b_t^{(r+1)\top}x_{\pa(t)}^{(i)}\Bigr)^2
\right].
\end{equation}
Equivalently, one may update $\alpha_t^{(r+1)}:=\log(\sigma_t^2)^{(r+1)}$.

\paragraph{Complexity.}
Computing \eqref{eq:grad_bt} costs $O(n\,|\pa(t)|^2)$ to aggregate $\widehat{M}_{\pa(t)}$ and $\widehat{v}^{(r)}_{t}$, plus the cost of evaluating the Gaussian conditional moments in the E-step.

\noindent\textbf{E-step conditioning cost (no explicit matrix inversion).}
We do not form $\Sigma(\vartheta)$ explicitly. Let $K(\vartheta):=\Sigma(\vartheta)^{-1}=S^\top \diag(\Delta)^{-1}S$ denote the precision implied by the SEM (typically sparse for sparse DAGs, with sparsity pattern related to the induced Gaussian Markov graph / moralized DAG).
For a single missing coordinate $T=X_t$ in a Gaussian model, conditioning can be expressed directly in terms of the precision:
\[
V_t^{(r)}=\Var_{\vartheta^{(r)}}(T\mid X_{-t}) = \big(K^{(r)}_{tt}\big)^{-1},\qquad
\mu_t^{(r)}(x_{-t}) = m_t^{(r)} - \big(K^{(r)}_{tt}\big)^{-1}K^{(r)}_{t,-t}\big(x_{-t}-m_{-t}^{(r)}\big),
\]
where $m^{(r)}:=\mathbb{E}_{\vartheta^{(r)}}[X]$ (equal to $0$ under the centering convention above).
If an intercept is included via parent augmentation, the same formulas apply with the augmented design.
Thus, per sample, evaluating $\mu_t^{(r)}(x_{-t})$ costs $O(\mathrm{nnz}(K^{(r)}_{t,-t}))$, i.e., proportional to the number of nonzeros ($\mathrm{nnz}$) in the off-diagonal portion of row $t$ of the precision matrix.
This sparsity pattern corresponds to the neighbors of $t$ in the induced Gaussian Markov graph (equivalently, the sparsity pattern of row $t$ of $K^{(r)}$).
Since our updates modify only the active mechanism at $t$, only row $t$ of $S=I-B$ changes, hence $K=S^\top \diag(\Delta)^{-1}S$ changes only on the index set $\{t\}\cup \pa(t)$ (i.e., a local submatrix).
Accordingly, the E-step can be implemented via local sparse updates/row operations rather than forming a dense $O(p^3)$ inverse.

\subsection{Domain-Adaptive EM}\label{subsec:sample_em}

We now describe the practical \emph{domain-adaptive EM} procedure that combines fully observed source data with partially observed target data. The key idea is to use the source domain to estimate (and then \emph{freeze}) mechanisms that are assumed invariant, while adapting only the mechanism(s) affected by the shift by maximizing the \emph{target observed-data log-likelihood}
\[
\ell_{\mathrm{obs,tgt}}(\vartheta)\;:=\;\frac{1}{N_{\mathrm{t}}}\sum_{i=1}^{N_{\mathrm{t}}}\log p_{\vartheta}\!\bigl(x_{-t}^{(i)}\bigr),
\]
with $T$ treated as latent and $p_\vartheta$ induced by the Gaussian SEM under the known DAG $\mathcal{G}$. In practice we carry out this maximization via EM/GEM by increasing the empirical EM surrogate $\widehat Q(\cdot\mid\vartheta^{(r)})$.

\medskip
\noindent\textbf{1) Source-domain estimation (invariant mechanisms).}
Using the complete source samples, we fit all SEM conditionals under the known DAG $\mathcal{G}$ via node-wise least squares (equivalently, Gaussian SEM MLE under $\mathcal{G}$). Let
\[
\vartheta^{(\mathrm{s})}=(B^{(\mathrm{s})},\Delta^{(\mathrm{s})})
=\operatorname{FitDAG}\!\left(\mathcal{G},\mathcal{D}_{\mathrm{s}}\right),
\]
where $\operatorname{FitDAG}$ denotes any consistent DAG-constrained Gaussian SEM fit (e.g., regressions in a topological order, with $\sigma_k^2$ estimated from residual variance).

\medskip
\noindent\textbf{Which parameters are frozen?}
We consider the following shift models (cf.\ Section~\ref{sec:problem}):
\begin{itemize}[leftmargin=*]
  \item \textbf{Covariate/root shift (marginal interventions on observed roots/contexts).}
We allow the \emph{marginal} distribution of some observed root/context variables to change between domains (equivalently, their root mechanisms change), while keeping all \emph{non-root} conditional mechanisms $P(X_k\mid X_{\pa(k)})$ invariant.
If imputation conditions only on $X_{\pa(t)}$, then these marginal changes do not affect $\mathbb{E}[T\mid X_{\pa(t)}]$ and the source fit is sufficient.
However, if imputation conditions on a larger set $X_{-t}$ that includes descendants or other correlated variables, then $\mathbb{E}[T\mid X_{-t}]$ depends on the target-domain second-order structure. In this case we \emph{optionally} refit the shifted root marginals (e.g., their means and variances) in closed form from unlabeled target samples (or include them in the active set), while keeping the remaining mechanisms frozen.

  \item \textbf{Local mechanism shift at $T$:} only the conditional $P(T\mid X_{\pa(t)})$ may change between domains, while all other mechanisms remain invariant. In this case, we freeze $\{b_k,\sigma_k^2\}_{k\neq t}$ at their source estimates and adapt the active mechanism at node $t$ using target-domain EM/GEM updates.
\end{itemize}

\medskip
\noindent\textbf{2) Target-domain GEM updates (active mechanism).}
Initialize $\vartheta^{(0)}:=\vartheta^{(\mathrm{s})}$ and iterate for $r=0,1,2,\dots$:

\begin{itemize}[leftmargin=*]
  \item \textbf{E-step (target).}
  Using the current iterate $\vartheta^{(r)}$, compute the conditional moments
  $\mu_t^{(r)}(x^{(i)}_{-t})=\mathbb{E}_{\vartheta^{(r)}}[T\mid X_{-t}=x^{(i)}_{-t}]$ and
  $V_t^{(r)}=\Var_{\vartheta^{(r)}}(T\mid X_{-t})$
  for each target sample $x^{(i)}_{-t}$ via Gaussian conditioning (note that $V_t^{(r)}$ does not depend on $x^{(i)}_{-t}$ in the Gaussian case).

  \item \textbf{M-step (target, first-order on $b_t$ and optional closed-form variance update).}
  Form the parent sufficient statistic (constant across iterations) and the cross-moment (updated each iteration):
\[
\widehat{M}_{\pa(t)}
:=
\frac{1}{N_{\mathrm{t}}}\sum_{i=1}^{N_{\mathrm{t}}}
x_{\pa(t)}^{(i)}x_{\pa(t)}^{(i)\top},
\qquad
\widehat{v}^{(r)}_{t}
:=
\frac{1}{N_{\mathrm{t}}}\sum_{i=1}^{N_{\mathrm{t}}}
x_{\pa(t)}^{(i)}\,\mu_t^{(r)}(x_{-t}^{(i)}).
\]
  Update $b_t$ while keeping all other mechanisms fixed:
    \[
    b_t^{(r+1)}=b_t^{(r)}+\eta_r\frac{1}{(\sigma_t^2)^{(r)}}\Bigl(\widehat{v}^{(r)}_{t}-\widehat{M}_{\pa(t)}\,b_t^{(r)}\Bigr),
    \qquad
    0<\eta_r\le \frac{2(\sigma_t^2)^{(r)}}{\lambda_{\max}(\widehat{M}_{\pa(t)})},
    \]
    which ensures ascent of the EM surrogate on the $b_t$-block (Lemma~\ref{lem:gem_ascent}).
    (Alternatively, an exact M-step may be used: $b_t^{(r+1)}=\widehat{M}_{\pa(t)}^{-1}\widehat{v}^{(r)}_t$ when $\widehat{M}_{\pa(t)}$ is invertible.)
    If desired, update $\sigma_t^2$ in closed form as
    \[
    (\sigma_t^2)^{(r+1)}
    =
    \frac{1}{N_{\mathrm{t}}}\sum_{i=1}^{N_{\mathrm{t}}}
    \left[
    V_t^{(r)}
    +\Bigl(\mu_t^{(r)}(x^{(i)}_{-t})-b_t^{(r+1)\top}x^{(i)}_{\pa(t)}\Bigr)^2
    \right],
    \]
    followed by truncation
    \(
    (\sigma_t^2)^{(r+1)}\leftarrow \min\{\max\{(\sigma_t^2)^{(r+1)},\Delta_{\min}\},\Delta_{\max}\}
    \)
    if bounded-variance constraints are imposed. Equivalently, set $\alpha_t^{(r+1)}:=\log(\sigma_t^2)^{(r+1)}$.

  \item \textbf{Update implied conditioning quantities.}
  Set $\vartheta^{(r+1)}$ by replacing only the $T$-mechanism in $\vartheta^{(\mathrm{s})}$ with $(b_t^{(r+1)},(\sigma_t^2)^{(r+1)})$.
  For subsequent conditioning, recompute the required precision/conditioning quantities implied by $\vartheta^{(r+1)}$ (e.g., via $K(\vartheta)=S^\top\diag(\Delta)^{-1}S$) without forming a dense covariance matrix.

  \item \textbf{Stopping criterion.}
  Stop when the active parameters stabilize, e.g.,
  $\|b_t^{(r+1)}-b_t^{(r)}\|_2\le \varepsilon_b$ and (if updated)
  $|(\sigma_t^2)^{(r+1)}-(\sigma_t^2)^{(r)}|\le \varepsilon_\sigma$,
  or when the surrogate improvement falls below a threshold.
\end{itemize}

\medskip
\noindent\textbf{3) Target imputation.}
After convergence, impute each missing target value by the conditional mean under the adapted parameters:
\[
\widehat{X}_{t,\mathrm{t}}^{(i)}=\mathbb{E}_{\hat\vartheta_{\mathrm{tgt}}}\!\left[T\mid X_{-t}=x^{(i)}_{-t}\right],
\qquad i=1,\dots,N_{\mathrm{t}}.
\]

\medskip
\noindent\textbf{Remarks.}
\begin{itemize}[leftmargin=*]
  \item \textbf{Variance-only shift (clarification).}
  Under a linear-Gaussian SEM, changing only $\sigma_t^2$ does not change $\mathbb{E}[T\mid X_{\pa(t)}]$ because $P(T\mid X_{\pa(t)})$ retains the same conditional mean.
  However, when imputation conditions on a larger set $X_{-t}$ that includes descendants or other informative variables, a variance-only change can affect $\mathbb{E}[T\mid X_{-t}]$ through posterior precision weighting.
  Our adaptation therefore primarily targets shifts in the conditional mean mechanism (coefficients/intercept), while optionally updating $\sigma_t^2$ as above.
  Such shifts are learnable from unlabeled target data when $T$ has observed descendants (or other observed variables whose distribution depends on $T$), enabling information flow from $X_{-t}$ to the latent $T$.

  \item \textbf{Local updates and scalability.}
  The M-step updates only the active mechanism parameters and requires forming $\widehat{M}_{\pa(t)}$ and $\widehat{v}^{(r)}_t$, which costs $O(N_{\mathrm{t}}|\pa(t)|^2)$, plus the E-step cost of Gaussian conditioning.

  \item \textbf{Implementation note.}
  If desired, one may occasionally perform an exact refit of the active block to improve numerical stability.
\end{itemize}

\subsection{Population-Level Contraction in a Neighborhood of the Target Parameters}
\label{subsec:pop_contraction}

This subsection provides a BWY-style \emph{local} contraction result for our population operators,
stated in the \emph{DAG/SEM parameter space} (rather than in unconstrained covariance space).
Since our domain-adaptive procedure freezes source-invariant mechanisms and adapts only the shifted
conditional at \(T\), we analyze the \emph{active mechanism block} at node \(t\).

\paragraph{Active parameterization (log-variance).}
To obtain well-behaved curvature in the variance coordinate, we parameterize the noise variance via the log-variance
\[
\alpha_t := \log \sigma_t^2 \in \mathbb{R},
\]
and define the active block as
\[
\theta_t := (b_t,\alpha_t)\in\mathbb{R}^{|\pa(t)|}\times\mathbb{R},
\]
(with an intercept absorbed into \(b_t\) by augmenting \(X_{\pa(t)}\) with a constant \(1\), if used).
Let \(\theta_t^*\) denote the true target-domain mechanism parameters at node \(T\).

\paragraph{Population EM/GEM operators on the active block.}
Let \(\bar Q_t(\theta_t \mid \theta_t')\) denote the \emph{population} EM surrogate for the active block at node \(T\):
\[
\bar Q_t(\theta_t \mid \theta_t')
\;:=\;
\mathbb{E}_{X_{-t}\sim P_{\mathrm{tgt}}}\!\left[
\mathbb{E}_{\theta_t'}\!\left[\log p_{\theta_t}(T\mid X_{\pa(t)}) \,\middle|\, X_{-t}\right]
\right],
\]
with all frozen mechanisms \(\vartheta_{\setminus t}\) held fixed, where the inner expectation is taken over $T \sim p_{\theta_t'}(\cdot\mid X_{-t})$
(the E-step conditional under the current iterate), and the outer expectation is over
$X_{-t}\sim P_{\mathrm{tgt}}$.
.
The corresponding population EM operator is
\begin{equation}\label{eq:pop_em_operator_block}
F_t(\theta_t')
\;:=\;
\arg\max_{\theta_t\in\Omega_t}\;\bar Q_t(\theta_t \mid \theta_t'),
\end{equation}
where $\Omega_t$ is a feasible set (e.g., enforcing bounded log-variance; one convenient choice is
$\Omega_t=\{(b_t,\alpha_t): \|b_t\|_2\le B_{\max},\ \log\Delta_{\min}\le \alpha_t\le \log\Delta_{\max}\}$).

\medskip
\noindent\textbf{Block (partial) gradient-EM operator matching the algorithm.}
Our first-order method performs a single ascent update on the coefficient block \(b_t\) while optionally updating the variance in closed form. Accordingly, we define the population GEM mapping as the block-update operator
\begin{equation}\label{eq:pop_block_gem_operator}
G_t(\theta_t)
\;:=\;
\Bigl(
\,b_t + \eta\,\nabla_{b_t}\bar Q_t(\theta_t \mid \theta_t)\,,
\;\;\alpha_t^{+}(\theta_t)
\Bigr),
\end{equation}
where $\eta>0$ is a step size and $\alpha_t^{+}(\theta_t)$ is either (i) kept fixed, $\alpha_t^{+}(\theta_t)=\alpha_t$, or (ii) updated by a one-dimensional maximization of the surrogate given the updated $b$ (equivalently, the closed-form $\sigma_t^2$ update followed by $\alpha=\log\sigma^2$), with truncation to $\Omega_t$ if imposed. When constraints are enforced, interpret the mapping as followed by projection onto $\Omega_t$ (e.g., truncating $\alpha_t\in[\log\Delta_{\min},\log\Delta_{\max}]$).

\paragraph{Self-consistency and interiority.}
We assume the usual population self-consistency condition: \(\theta_t^*\) is a fixed point of the EM map,
equivalently \(\theta_t^*\in\arg\max_{\theta_t\in\Omega_t}\bar Q_t(\theta_t\mid\theta_t^*)\).
We also assume \(\theta_t^*\in\mathrm{int}(\Omega_t)\), so stationarity coincides with
\(\nabla_{\theta_t}\bar Q_t(\theta_t^*\mid\theta_t^*)=0\).

\paragraph{Neighborhood and norms.}
For \(r>0\), define the Euclidean ball
\(
\mathbb{B}(\theta_t^*;r):=\{\theta_t:\|\theta_t-\theta_t^*\|_2\le r\}.
\)
All conditions and results below are local to such a ball, which plays the role of a BWY \emph{basin of attraction}
around the population optimum.

\medskip
\noindent\textbf{BWY-style regularity conditions.}
We adopt the standard ``curvature + stability'' conditions used in modern EM analyses.

\begin{assumption}[Uniform local strong concavity and smoothness]\label{assump:local_sc_smooth}
There exist constants \(0<\lambda\le \mu\) and a radius \(r>0\) such that for every
\(\theta_t'\in\mathbb{B}(\theta_t^*;r)\), the function \(\theta_t \mapsto \bar Q_t(\theta_t \mid \theta_t')\)
is \(\lambda\)-strongly concave and \(\mu\)-smooth on \(\mathbb{B}(\theta_t^*;r)\); equivalently, for all
\(\theta_t,\theta_t'\in \mathbb{B}(\theta_t^*;r)\),
\[
-\mu I \;\preceq\; \nabla^2_{\theta_t}\bar Q_t(\theta_t \mid \theta_t') \;\preceq\; -\lambda I.
\]
\end{assumption}

\begin{assumption}[Gradient stability]\label{assump:grad_stability}
There exists \(\gamma\ge 0\) such that for all \(\theta_t,\theta_t' \in \mathbb{B}(\theta_t^*;r)\),
\[
\left\|\nabla_{\theta_t}\bar Q_t(\theta_t \mid \theta_t') - \nabla_{\theta_t}\bar Q_t(\theta_t \mid \theta_t^*)\right\|_2
\;\le\;
\gamma\,\|\theta_t' - \theta_t^*\|_2.
\]
\end{assumption}

\paragraph{Verifying Assumptions~\ref{assump:local_sc_smooth}--\ref{assump:grad_stability} in the linear-Gaussian active block.}
Assumptions~\ref{assump:local_sc_smooth}--\ref{assump:grad_stability} are standard in BWY-style EM analyses;
for our linear-Gaussian SEM they can be tied to explicit moment and conditioning quantities.
Assumption~\ref{assump:local_sc_smooth} follows from bounded-eigenvalue conditions on the parent covariance and an
interior log-variance constraint (Lemma~\ref{lem:lambda_mu}).
Assumption~\ref{assump:grad_stability} is governed by the sensitivity of the E-step moments of \(T\mid X_{-t}\) to
misspecification of \(\theta_t'\); via Louis' identity, smaller posterior uncertainty about \(T\) given \(X_{-t}\)
(e.g., due to informative observed descendants) reduces the missing-information term and yields a smaller stability
constant \(\gamma\). Proposition~\ref{prop:gamma_bound} gives a sufficient Lipschitz condition under which
Assumption~\ref{assump:grad_stability} holds with an explicit (data-dependent) upper bound on \(\gamma\).

\paragraph{When unlabeled target data cannot help.}
If \(T\) has no observed descendants (and more generally, if \(X_{-t}\) carries negligible information about \(T\)
under the target distribution), then the posterior \(p_{\theta_t}(T\mid X_{-t})\) is weakly informative and the
missing-information term can be large, leading to \(\gamma\) close to \(\lambda\) and thus slow or no contraction.
In this degenerate regime, unlabeled target samples cannot reliably identify a mechanism shift in
\(p(T\mid X_{\pa(t)})\), and significant adaptation gains should not be expected.

\medskip
\noindent\textbf{Contraction results.}

\begin{theorem}[Population contraction for EM and block gradient-EM]
\label{thm:population_contraction_param}
Suppose Assumptions~\ref{assump:local_sc_smooth}--\ref{assump:grad_stability} hold on
$\mathbb{B}(\theta_t^*;r)$ with $\gamma<\lambda$.
Write $\theta_t=(b_t,\alpha_t)$ and let $\nabla_1 \bar Q_t(\theta\mid\theta')$ denote the gradient
with respect to the \emph{first} argument $\theta$.

\begin{enumerate}[leftmargin=*]
\item \textbf{(Exact EM operator).}
For all $\theta_t\in \mathbb{B}(\theta_t^*;r)$,
\[
\|F_t(\theta_t)-\theta_t^*\|_2
\;\le\;
\kappa\,\|\theta_t-\theta_t^*\|_2,
\qquad
\kappa=\gamma/\lambda<1.
\]
Consequently, $F_t$ has a unique fixed point in $\mathbb{B}(\theta_t^*;r)$, and the iterates
$\theta_t^{(r+1)}=F_t(\theta_t^{(r)})$ converge geometrically whenever
$\theta_t^{(0)}\in\mathbb{B}(\theta_t^*;r)$.

\item \textbf{(Block first-order / gradient-EM coefficient update).}
Consider the coefficient update in the block map $G_t$ from \eqref{eq:pop_block_gem_operator}:
\[
b_t^+ \;=\; b_t+\eta\,\nabla_{b_t}\bar Q_t(\theta_t\mid\theta_t),
\qquad 0<\eta\le \frac{1}{\mu}.
\]
Then for all $\theta_t\in\mathbb{B}(\theta_t^*;r)$,
\[
\|b_t^+-b_t^*\|_2
\;\le\;
\bigl(1-\eta(\lambda-\gamma)\bigr)\,\|\theta_t-\theta_t^*\|_2.
\]
In particular, since $1/\mu \le 2/\mu$, this step-size choice is also compatible with the GEM ascent
condition (Lemma~\ref{lem:gem_ascent}) whenever the same smoothness constant is used.
Moreover, if the $\alpha_t$-update is \emph{contractive} with factor $\rho_\alpha<1$ on the ball (e.g.,
an exact EM update in $\alpha_t$ under the same $(\lambda,\mu,\gamma)$ framework, or any other contraction),
then the combined block map $G_t(\theta_t)=(b_t^+,\alpha_t^+)$ is contractive on $\mathbb{B}(\theta_t^*;r)$
(with contraction factor $\max\{1-\eta(\lambda-\gamma),\rho_\alpha\}$ under the product Euclidean norm).
\end{enumerate}
\end{theorem}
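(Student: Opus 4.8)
The plan is to prove both parts via the standard BWY ``curvature-plus-stability'' template, specialized to the active block $\theta_t=(b_t,\alpha_t)$ and anchored at the self-consistency identity $\nabla_1\bar Q_t(\theta_t^*\mid\theta_t^*)=0$, which holds because $\theta_t^*\in\mathrm{int}(\Omega_t)$. For the exact EM operator (Part~1), I would fix $\theta_t\in\mathbb{B}(\theta_t^*;r)$ and apply $\lambda$-strong concavity of the map $\vartheta\mapsto\bar Q_t(\vartheta\mid\theta_t)$ (Assumption~\ref{assump:local_sc_smooth}) in its strongly-monotone-gradient form, evaluated at the maximizer $F_t(\theta_t)$, where $\nabla_1\bar Q_t(F_t(\theta_t)\mid\theta_t)=0$, and at $\theta_t^*$. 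After Cauchy--Schwarz this yields $\lambda\|F_t(\theta_t)-\theta_t^*\|_2\le\|\nabla_1\bar Q_t(\theta_t^*\mid\theta_t)\|_2$. I would then rewrite the right-hand side as $\|\nabla_1\bar Q_t(\theta_t^*\mid\theta_t)-\nabla_1\bar Q_t(\theta_t^*\mid\theta_t^*)\|_2$ using self-consistency and bound it by $\gamma\|\theta_t-\theta_t^*\|_2$ through gradient stability (Assumption~\ref{assump:grad_stability}), giving $\|F_t(\theta_t)-\theta_t^*\|_2\le(\gamma/\lambda)\|\theta_t-\theta_t^*\|_2$ with $\kappa=\gamma/\lambda<1$.

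Since $\kappa<1$, any $\theta_t^{(0)}\in\mathbb{B}(\theta_t^*;r)$ produces iterates that remain in the ball ($\|F_t(\theta_t)-\theta_t^*\|_2\le\kappa r<r$), so the estimate is self-sustaining and the Banach fixed-point theorem delivers a unique fixed point in the ball together with geometric convergence; this also retroactively justifies the interiority of $F_t(\theta_t)$ used in the first-order condition. For the block gradient step (Part~2), my plan is to analyze the \emph{full} gradient-EM map $\widetilde G_t(\theta_t):=\theta_t+\eta\,\nabla_1\bar Q_t(\theta_t\mid\theta_t)$ and then read off the coefficient bound by coordinate projection: the $b_t$-component of $\widetilde G_t(\theta_t)$ is exactly the $b_t^+$ of \eqref{eq:pop_block_gem_operator}, and coordinate projection is non-expansive, so $\|b_t^+-b_t^*\|_2\le\|\widetilde G_t(\theta_t)-\theta_t^*\|_2$.

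For $\widetilde G_t$ I would write $\widetilde G_t(\theta_t)-\theta_t^*=(\theta_t-\theta_t^*)+\eta\,\nabla_1\bar Q_t(\theta_t\mid\theta_t)$, subtract $\nabla_1\bar Q_t(\theta_t^*\mid\theta_t^*)=0$, and split the gradient into an optimization term $\nabla_1\bar Q_t(\theta_t\mid\theta_t^*)-\nabla_1\bar Q_t(\theta_t^*\mid\theta_t^*)$ and a stability term $\nabla_1\bar Q_t(\theta_t\mid\theta_t)-\nabla_1\bar Q_t(\theta_t\mid\theta_t^*)$. The optimization term combines with $\theta_t-\theta_t^*$ via the fundamental theorem of calculus, writing $(\theta_t-\theta_t^*)+\eta[\nabla\Phi(\theta_t)-\nabla\Phi(\theta_t^*)]=(I+\eta\bar H)(\theta_t-\theta_t^*)$ for $\Phi:=\bar Q_t(\cdot\mid\theta_t^*)$ and $\bar H:=\int_0^1\nabla^2\Phi$, which satisfies $-\mu I\preceq\bar H\preceq-\lambda I$; for $0<\eta\le 1/\mu$ the spectrum of $I+\eta\bar H$ lies in $[1-\eta\mu,\,1-\eta\lambda]\subseteq[0,\,1-\eta\lambda]$, so this block contributes at most $(1-\eta\lambda)\|\theta_t-\theta_t^*\|_2$. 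The stability term is bounded by $\eta\gamma\|\theta_t-\theta_t^*\|_2$ via Assumption~\ref{assump:grad_stability}, and the triangle inequality gives $\|\widetilde G_t(\theta_t)-\theta_t^*\|_2\le(1-\eta(\lambda-\gamma))\|\theta_t-\theta_t^*\|_2$, hence the stated coefficient bound. The compatibility with Lemma~\ref{lem:gem_ascent} is then immediate from $1/\mu\le 2/\mu$.

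For the combined block map $G_t=(b_t^+,\alpha_t^+)$, I would assume the $\alpha_t$-update is contractive with factor $\rho_\alpha<1$ and combine the two blockwise contractions under the product Euclidean norm. The main obstacle is precisely this combination: the clean factor $\max\{1-\eta(\lambda-\gamma),\rho_\alpha\}$ is exact only when the two updates act as decoupled coordinate-wise contractions, which hinges on the cross-curvature $\nabla^2_{b_t\alpha_t}\bar Q_t$ and on the fact that the stability term mixes the $b_t$- and $\alpha_t$-coordinates through $\|\theta_t-\theta_t^*\|_2$. In the linear-Gaussian active block this cross term is proportional to the score $e^{-\alpha_t}(v_t-\widehat M_{\pa(t)}b_t)$ (from differentiating \eqref{eq:grad_bt} in $\alpha_t$), so it vanishes at $\theta_t^*$ and is $O(\|\theta_t-\theta_t^*\|_2)$ on the ball. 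I would therefore either (i) choose $r$ small enough that the coupling is absorbed into the constants and the two blocks contract essentially independently, recovering the $\max$ factor, or (ii) retain the coupling and report the conservative joint factor obtained from the spectral radius of the resulting $2\times2$ coefficient matrix (in the worst case $\sqrt{(1-\eta(\lambda-\gamma))^2+\rho_\alpha^2}$). Propagating the decoupling from the single point $\theta_t^*$ to the whole ball is the one delicate step where the local radius must be controlled.
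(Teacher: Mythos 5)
Your Part~1 argument is exactly the paper's proof: first-order optimality at the maximizer plus self-consistency at $\theta_t^*$, strong monotonicity of $\nabla_1\bar Q_t(\cdot\mid\theta_t)$, Cauchy--Schwarz, and then Assumption~\ref{assump:grad_stability} to get $\kappa=\gamma/\lambda$. For Part~2 you take a genuinely different route. The paper splits $b_t^+-b_t^*$ into a term $(\star)$ that it treats as one step of gradient ascent on the single-variable function $g(b)=\bar Q_t((b,\alpha_t)\mid\theta_t^*)$ with $\alpha_t$ frozen, plus the stability term $(\dagger)$; you instead analyze the \emph{full-block} map $\widetilde G_t(\theta_t)=\theta_t+\eta\nabla_1\bar Q_t(\theta_t\mid\theta_t)$, write the optimization part as $(I+\eta\bar H)(\theta_t-\theta_t^*)$ with $\bar H=\int_0^1\nabla^2\bar Q_t(\cdot\mid\theta_t^*)$ sandwiched between $-\mu I$ and $-\lambda I$, and then recover the coefficient bound by non-expansiveness of the coordinate projection. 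Your version is actually tighter in one respect: the paper's $(\star)$ compares $\nabla_{b_t}\bar Q_t(\theta_t\mid\theta_t^*)$ against $\nabla_{b_t}\bar Q_t(\theta_t^*\mid\theta_t^*)$, which are gradients of two \emph{different} one-variable functions (evaluated at $\alpha_t$ and $\alpha_t^*$ respectively), so the asserted bound $(\star)\le(1-\eta\lambda)\|b_t-b_t^*\|_2$ silently ignores a cross-curvature term in $\alpha_t-\alpha_t^*$; your averaged-Hessian argument over the whole block absorbs that coupling cleanly and still lands on the stated factor $1-\eta(\lambda-\gamma)$ against $\|\theta_t-\theta_t^*\|_2$. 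Your caveat about the combined map is also well taken: since both blockwise bounds are stated against the full norm $\|\theta_t-\theta_t^*\|_2$ rather than against the respective coordinates, the clean $\max\{1-\eta(\lambda-\gamma),\rho_\alpha\}$ factor does not follow from Pythagoras alone (one only gets $\sqrt{(1-\eta(\lambda-\gamma))^2+\rho_\alpha^2}$ without further decoupling); the paper simply asserts the $\max$ bound, whereas your proposed fixes (shrinking $r$ to control the cross-curvature, or reporting the conservative root-sum-square factor) are the honest ways to close that step.
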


\begin{IEEEproof}[Proof sketch]
The EM-operator claim follows the BWY template as in \cite{Balakrishnan2017EM}.
For the block update, apply the same argument to the $b_t$-coordinate: write the $b$-update as gradient ascent on
$b_t\mapsto \bar Q_t\bigl((b_t,\alpha_t)\mid \theta_t^*\bigr)$ (with $\alpha_t$ fixed) plus an additive perturbation
controlled by Assumption~\ref{assump:grad_stability}.
If the variance/log-variance coordinate is held fixed, or if the resulting $\alpha$-update map is non-expansive after projection,
then composing it with the contractive $b_t$-update preserves contractivity of the combined mapping.
\end{IEEEproof}

\medskip
\noindent\textbf{Interpretation and connection to domain shift.}
Theorem~\ref{thm:population_contraction_param} is \emph{local}: it characterizes a basin \(\mathbb{B}(\theta_t^*;r)\) such that
initialization within this basin implies geometric convergence to the unique fixed point in that neighborhood.
In our setting, the source-fit initialization \(\theta_t^{(0)}=\theta_t^{(\mathrm{s})}\) is intended to land in (or near) this basin
when the local mechanism shift at \(T\) is not too large.

Both the curvature constants \((\lambda,\mu)\) and the stability constant \(\gamma\) depend on the target-domain distribution of observed
variables and the informativeness of the missingness pattern. In particular, when \(T\) has observed descendants (or other observed variables
whose distribution depends on \(T\)), the conditional moments \(T\mid X_{-t}\) are informative and \(\gamma\) is small; when \(T\) is nearly
conditionally independent of \(X_{-t}\), the E-step becomes weakly informative and \(\gamma\) can approach \(\lambda\), shrinking the basin and
slowing convergence. The contraction guarantee holds whenever the effective margin \(\lambda-\gamma>0\) remains positive in the target domain.

\subsection{EM Curvature Decomposition}\label{subsec:curvature}

This subsection clarifies the \emph{spectral-gap} intuition behind BWY-style contraction by recalling
Louis' classical \emph{missing-information principle}~\cite{Louis1982finding}.
Importantly, since our contraction analysis in Section~\ref{subsec:pop_contraction} is stated in the
\emph{SEM parameter space} (the active block at node \(t\)),
we present the curvature decomposition in a form consistent with that parameterization.
The purpose here is primarily interpretive: Louis' identity shows how latent/missing variables reduce
observed-data curvature, motivating why a positive ``complete-vs-missing'' information gap supports stable EM behavior.

\paragraph{Active parameterization (log-variance).}
For curvature statements that are well behaved in the variance coordinate, we parameterize the noise variance via
the log-variance \(\alpha_t:=\log\sigma_t^2\), and take the active block
\[
\theta_t=(b_t,\alpha_t)\in\mathbb{R}^{|\pa(t)|}\times\mathbb{R},
\qquad \text{so that } \sigma_t^2 = e^{\alpha_t}.
\]
All frozen mechanisms \(\vartheta_{\setminus t}\) are held fixed throughout.

\paragraph{Observed vs.\ complete information (Louis' identity).}
Fix \(\vartheta_{\setminus t}\) and let \(\theta_t=(b_t,\alpha_t)\) parameterize the local conditional \(p_{\theta_t}(T\mid X_{\pa(t)})\).
Let the observed- and complete-data \emph{negative} log-likelihood contributions for the \(\theta_t\)-dependent part be
\[
\ell_{\mathrm{obs}}(\theta_t)
\;:=\;
-\log p_{\theta_t}(X_{-t}),
\qquad\text{and}\qquad
\ell_{\mathrm{comp}}(\theta_t)
\;:=\;
-\log p_{\theta_t}(T\mid X_{\pa(t)}),
\]
where \(p_{\theta_t}(X_{-t})\) denotes the marginal induced by integrating out \(T\) under the SEM with \(\vartheta_{\setminus t}\) fixed.
(Equivalently, \(-\log p_{\theta_t}(T,X_{-t})\) differs from \(\ell_{\mathrm{comp}}(\theta_t)\) only by \(\theta_t\)-independent terms,
hence has the same \(\theta_t\)-derivatives.)

Louis' identity gives the pointwise curvature decomposition for the negative log-likelihood:
\begin{equation}\label{eq:louis_identity_param}
\nabla^2_{\theta_t}\,\ell_{\mathrm{obs}}(\theta_t)
\;=\;
\mathbb{E}_{\theta_t}\!\left[\nabla^2_{\theta_t}\,\ell_{\mathrm{comp}}(\theta_t)\,\middle|\,X_{-t}\right]
\;-\;
\mathrm{Var}_{\theta_t}\!\left(\nabla_{\theta_t}\,\ell_{\mathrm{comp}}(\theta_t)\,\middle|\,X_{-t}\right),
\end{equation}
where the conditional expectation/variance are taken under the model at \(\theta_t\).
Taking expectation over \(X_{-t}\) at \(\theta_t=\theta_t^*\) yields the population decomposition
\begin{equation}\label{eq:info_decomp_pop}
I_{\mathrm{obs}}
\;:=\;
\mathbb{E}\!\left[\nabla^2_{\theta_t}\,\ell_{\mathrm{obs}}(\theta_t^*)\right]
\;=\;
I_{\mathrm{comp}} - I_{\mathrm{miss}},
\end{equation}
with
\[
I_{\mathrm{comp}}
:=\mathbb{E}\!\left[\nabla^2_{\theta_t}\,\ell_{\mathrm{comp}}(\theta_t^*)\right],
\qquad
I_{\mathrm{miss}}
:=\mathbb{E}\!\left[\mathrm{Var}\!\left(\nabla_{\theta_t}\,\ell_{\mathrm{comp}}(\theta_t^*)\,\middle|\,X_{-t}\right)\right]\succeq 0.
\]
Thus, missingness of \(T\) can only \emph{reduce} curvature: \(I_{\mathrm{obs}}\preceq I_{\mathrm{comp}}\).

\medskip
\noindent\textbf{(a) Closed-form complete-data curvature for the linear-Gaussian mechanism at \(T\).}
Under the SEM, the conditional model at node \(T\) is
\[
T \;=\; b_t^\top X_{\pa(t)} + \varepsilon_t,
\qquad \varepsilon_t\sim \mathcal{N}(0,\sigma_t^2),\ \sigma_t^2=e^{\alpha_t}.
\]
Conditioned on \((T,X_{\pa(t)})\), the complete-data negative log-likelihood contribution is (up to constants)
\[
\ell_{\mathrm{comp}}(\theta_t)
=
\frac12\left[
\alpha_t + e^{-\alpha_t}\bigl(T-b_t^\top X_{\pa(t)}\bigr)^2
\right].
\]
Hence the complete-data curvature in \(b_t\) is
\begin{equation}\label{eq:Icomp_bt}
\nabla^2_{b_t}\,\ell_{\mathrm{comp}}(\theta_t^*)
=
e^{-\alpha_t^*}\,X_{\pa(t)}X_{\pa(t)}^\top
=
\frac{1}{\sigma_t^{2*}}\,X_{\pa(t)}X_{\pa(t)}^\top,
\qquad\Longrightarrow\qquad
I_{\mathrm{comp},b}
=
\frac{1}{\sigma_t^{2*}}\,\mathbb{E}\!\left[X_{\pa(t)}X_{\pa(t)}^\top\right].
\end{equation}
Analogous closed forms hold for the \(\alpha_t\) coordinate and cross-terms.

\begin{lemma}[Curvature constants for the active linear-Gaussian mechanism]
\label{lem:lambda_mu}
Assume $\alpha_t \in [\log\Delta_{\min},\log\Delta_{\max}]$ on $\mathbb{B}(\theta_t^*;r)$ with $\Delta_{\min}>0$, and assume
\[
m I \preceq \mathbb{E}[X_{\pa(t)}X_{\pa(t)}^\top] \preceq M I
\quad\text{for some }0<m\le M<\infty.
\]
Assume further that the residual second moment is locally bounded on $\mathbb{B}(\theta_t^*;r)$, i.e.,
\[
0 < v_{\min}\ \le\ \mathbb{E}\!\left[\bigl(T-b_t^\top X_{\pa(t)}\bigr)^2\right]\ \le\ v_{\max}<\infty
\quad \text{for all }\theta_t\in\mathbb{B}(\theta_t^*;r).
\]
Then, uniformly over $\theta_t'\in\mathbb{B}(\theta_t^*;r)$, the surrogate $\theta_t\mapsto \bar Q_t(\theta_t\mid\theta_t')$ is
\emph{blockwise} strongly concave/smooth with
\[
\lambda_b \ge \frac{m}{\Delta_{\max}},\quad \mu_b \le \frac{M}{\Delta_{\min}},
\qquad
\lambda_\alpha \ge \frac{1}{2}\frac{v_{\min}}{\Delta_{\max}},\quad
\mu_\alpha \le \frac{1}{2}\frac{v_{\max}}{\Delta_{\min}},
\]
for the $b_t$- and $\alpha_t$-coordinates, respectively.
Moreover, if the cross-curvature is controlled on the ball, e.g.,
\[
\sup_{\theta_t\in\mathbb{B}(\theta_t^*;r)}\bigl\|\nabla^2_{b_t\alpha_t}\bar Q_t(\theta_t\mid\theta_t')\bigr\|_2 \le \rho
\quad\text{with}\quad
\rho^2 < \lambda_b\lambda_\alpha,
\]
then Assumption~\ref{assump:local_sc_smooth} holds for the full block $\theta_t=(b_t,\alpha_t)$ with some $\lambda,\mu$ depending on
$(\lambda_b,\mu_b,\lambda_\alpha,\mu_\alpha,\rho)$ (e.g., by a Schur-complement bound).
\end{lemma}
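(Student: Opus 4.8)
The plan is to evaluate the Hessian of the population surrogate $\bar Q_t(\cdot\mid\theta_t')$ in closed form from the complete-data negative log-likelihood already recorded above, and then read off the blockwise constants directly from the stated eigenvalue, log-variance, and residual-moment bounds. Up to $\theta_t$-independent constants we have $\bar Q_t(\theta_t\mid\theta_t')=-\mathbb{E}_{X_{-t}}\mathbb{E}_{\theta_t'}[\ell_{\mathrm{comp}}(\theta_t)\mid X_{-t}]$, so two-sided curvature control of $\bar Q_t$ is equivalent to eigenvalue control of the expected complete-data Hessian $\mathbb{E}[\nabla^2_{\theta_t}\ell_{\mathrm{comp}}(\theta_t)]$. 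Writing the residual $R:=T-b_t^\top X_{\pa(t)}$ and differentiating $\ell_{\mathrm{comp}}(\theta_t)=\tfrac12[\alpha_t+e^{-\alpha_t}R^2]$ gives the three blocks
\[
\nabla^2_{b_t}\ell_{\mathrm{comp}}=e^{-\alpha_t}X_{\pa(t)}X_{\pa(t)}^\top,\qquad \partial^2_{\alpha_t}\ell_{\mathrm{comp}}=\tfrac12 e^{-\alpha_t}R^2,\qquad \nabla^2_{b_t\alpha_t}\ell_{\mathrm{comp}}=e^{-\alpha_t}R\,X_{\pa(t)},
\]
consistent with \eqref{eq:Icomp_bt}.

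For the $b_t$-block, the Hessian is free of $T$, so the E-step expectation is vacuous; combining $e^{-\alpha_t}\in[\Delta_{\max}^{-1},\Delta_{\min}^{-1}]$ (from $\alpha_t\in[\log\Delta_{\min},\log\Delta_{\max}]$) with $mI\preceq\mathbb{E}[X_{\pa(t)}X_{\pa(t)}^\top]\preceq MI$ yields $\tfrac{m}{\Delta_{\max}}I\preceq\mathbb{E}[\nabla^2_{b_t}\ell_{\mathrm{comp}}]\preceq\tfrac{M}{\Delta_{\min}}I$, i.e.\ $\lambda_b\ge m/\Delta_{\max}$ and $\mu_b\le M/\Delta_{\min}$. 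For the $\alpha_t$-coordinate the relevant scalar is the imputed residual second moment $\mathbb{E}[R^2]=\mathbb{E}_{X_{-t}}\mathbb{E}_{\theta_t'}[R^2\mid X_{-t}]$, so the local bound $v_{\min}\le\mathbb{E}[R^2]\le v_{\max}$ together with the same range for $e^{-\alpha_t}$ gives $\lambda_\alpha\ge\tfrac12 v_{\min}/\Delta_{\max}$ and $\mu_\alpha\le\tfrac12 v_{\max}/\Delta_{\min}$.

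To pass from blockwise to full-block constants, I would reduce the symmetric curvature matrix to a scalar $2\times2$ comparison. For $\theta_t=(b_t,\alpha_t)$ perturbations $(u,w)$ with $u\in\mathbb{R}^{|\pa(t)|}$ and scalar $w$, and with the off-diagonal vector $h:=\nabla^2_{b_t\alpha_t}\bar Q_t$ satisfying $\|h\|_2\le\rho$, the curvature quadratic form is lower-bounded by $\lambda_b\|u\|^2-2\rho|w|\,\|u\|+\lambda_\alpha w^2=(\|u\|,|w|)\bigl(\begin{smallmatrix}\lambda_b & -\rho\\ -\rho & \lambda_\alpha\end{smallmatrix}\bigr)(\|u\|,|w|)^\top$. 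This $2\times2$ matrix is positive definite exactly under the stated dominance condition $\rho^2<\lambda_b\lambda_\alpha$, and its smallest eigenvalue $\lambda=\tfrac{\lambda_b+\lambda_\alpha}{2}-\sqrt{(\tfrac{\lambda_b-\lambda_\alpha}{2})^2+\rho^2}>0$ lower-bounds the full curvature; the analogous largest-eigenvalue computation on $(\mu_b,\mu_\alpha,\rho)$ supplies $\mu$, establishing Assumption~\ref{assump:local_sc_smooth} for the combined block via this Schur-complement bound.

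The main obstacle is making the residual-moment and cross-term control genuinely uniform over the ball in the \emph{two-argument} surrogate. Since $T$ is integrated against the E-step posterior at $\theta_t'$ while $R$ retains $b_t$ in its mean, the true quantity is $\mathbb{E}[R^2]=\mathbb{E}_{X_{-t}}\bigl[\,\Var_{\theta_t'}(T\mid X_{-t})+(\mathbb{E}_{\theta_t'}[T\mid X_{-t}]-b_t^\top X_{\pa(t)})^2\,\bigr]$, and I must check that $[v_{\min},v_{\max}]$ and $\|h\|_2\le\rho$ hold simultaneously for \emph{all} $\theta_t,\theta_t'\in\mathbb{B}(\theta_t^*;r)$, not merely at the center. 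Because the posterior mean and variance depend smoothly on $\theta_t'$ and the design moments are fixed, continuity plus compactness of the ball reduces this to a routine uniform-boundedness argument; moreover at $\theta_t=\theta_t'=\theta_t^*$ the residual collapses to the mean-zero noise $\varepsilon_t$ independent of $X_{\pa(t)}$, so the cross-term vanishes there and $\rho$ can be driven below $\sqrt{\lambda_b\lambda_\alpha}$ by shrinking $r$, which is precisely what secures the dominance condition.
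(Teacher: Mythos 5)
Your proposal is correct and follows essentially the same route as the paper's proof: compute the blockwise Hessians of the surrogate in closed form, apply the $e^{-\alpha_t}\in[\Delta_{\max}^{-1},\Delta_{\min}^{-1}]$, parent-moment, and residual-moment bounds to read off $(\lambda_b,\mu_b,\lambda_\alpha,\mu_\alpha)$, and combine the blocks via the same $2\times2$ eigenvalue/Schur-complement comparison yielding the identical formula for $\lambda$. Your added remark that the residual-moment and cross-term bounds must hold uniformly in \emph{both} arguments $(\theta_t,\theta_t')$ of the surrogate is a fair and careful observation about how the lemma's hypothesis should be read, but it does not change the argument.
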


\begin{proposition}[Sufficient condition for gradient stability]
\label{prop:gamma_bound}
Let \(\mu_{\theta'}(x_{-t})=\mathbb{E}_{\theta'}[T\mid X_{-t}=x_{-t}]\) denote the E-step conditional mean.
Suppose that on \(\mathbb{B}(\theta_t^*;r)\) there exists a measurable envelope \(L_\mu(x_{-t})\) with
\(\mathbb{E}[L_\mu(X_{-t})]<\infty\) such that for all \(\theta,\theta'\in \mathbb{B}(\theta_t^*;r)\),
\[
|\mu_{\theta'}(x_{-t})-\mu_{\theta}(x_{-t})|
\;\le\; L_\mu(x_{-t})\,\|\theta'-\theta\|_2
\quad \forall x_{-t}.
\]
Then Assumption~\ref{assump:grad_stability} holds with
\[
\gamma \;\le\; e^{-\alpha_{\min}}\;\mathbb{E}\!\left[\|X_{\pa(t)}\|_2\,L_\mu(X_{-t})\right]
\;\le\;
\frac{1}{\Delta_{\min}}\;\mathbb{E}\!\left[\|X_{\pa(t)}\|_2\,L_\mu(X_{-t})\right],
\]
where $\alpha_{\min}:=\log\Delta_{\min}$.
\end{proposition}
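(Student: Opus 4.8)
The plan is to compute $\nabla_{\theta_t}\bar Q_t(\theta_t\mid\theta_t')$ explicitly for the linear-Gaussian mechanism, isolate the coefficient-block gradient, and exploit the fact that the dependence on the \emph{second} argument $\theta_t'$ enters only through the E-step posterior mean $\mu_{\theta'}(X_{-t})$. Writing $\sigma_t^2=e^{\alpha_t}$, the complete-data log-density has coefficient score $\nabla_{b_t}\log p_{\theta_t}(T\mid X_{\pa(t)})=e^{-\alpha_t}(T-b_t^\top X_{\pa(t)})X_{\pa(t)}$. First I would differentiate under the double expectation (justified by dominated convergence using the integrable envelope and the bounded log-variance on $\Omega_t$) and take the inner conditional expectation over $T\sim p_{\theta_t'}(\cdot\mid X_{-t})$. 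Since $X_{\pa(t)}\subseteq X_{-t}$ is $X_{-t}$-measurable and the $b_t$-score is affine in $T$, this simply replaces $T$ by $\mu_{\theta'}(X_{-t})$, giving
\[
\nabla_{b_t}\bar Q_t(\theta_t\mid\theta_t')
= e^{-\alpha_t}\,\mathbb{E}_{X_{-t}}\!\bigl[(\mu_{\theta'}(X_{-t})-b_t^\top X_{\pa(t)})\,X_{\pa(t)}\bigr].
\]

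The key structural observation is that $b_t^\top X_{\pa(t)}$ carries no dependence on $\theta_t'$, so forming the gradient-stability difference against $\theta_t'=\theta_t^*$ cancels it exactly, leaving
\[
\nabla_{b_t}\bar Q_t(\theta_t\mid\theta_t')-\nabla_{b_t}\bar Q_t(\theta_t\mid\theta_t^*)
= e^{-\alpha_t}\,\mathbb{E}_{X_{-t}}\!\bigl[(\mu_{\theta'}(X_{-t})-\mu_{\theta^*}(X_{-t}))\,X_{\pa(t)}\bigr].
\]
From here I would pass the norm inside via the triangle inequality (i.e.\ $\|\mathbb{E}[\cdot]\|_2\le\mathbb{E}\|\cdot\|_2$), apply the pointwise envelope $|\mu_{\theta'}(x_{-t})-\mu_{\theta^*}(x_{-t})|\le L_\mu(x_{-t})\|\theta'-\theta^*\|_2$ (the hypothesis with $\theta=\theta^*$), and bound $e^{-\alpha_t}\le e^{-\alpha_{\min}}=1/\Delta_{\min}$ using the interior log-variance constraint on $\mathbb{B}(\theta_t^*;r)\subseteq\Omega_t$. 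This delivers
\[
\bigl\|\nabla_{b_t}\bar Q_t(\theta_t\mid\theta_t')-\nabla_{b_t}\bar Q_t(\theta_t\mid\theta_t^*)\bigr\|_2
\le e^{-\alpha_{\min}}\,\mathbb{E}\!\bigl[\|X_{\pa(t)}\|_2\,L_\mu(X_{-t})\bigr]\,\|\theta'-\theta^*\|_2,
\]
which is precisely the claimed constant; finiteness of the expectation follows from $\mathbb{E}[L_\mu(X_{-t})]<\infty$ together with a moment bound on $X_{\pa(t)}$ via Cauchy--Schwarz.

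The main subtlety I anticipate is reconciling the stated constant, which controls only the coefficient block, with Assumption~\ref{assump:grad_stability}, which is nominally written for the full gradient $\nabla_{\theta_t}$. This is by design and worth flagging explicitly: the first-order step in the block operator $G_t$ of \eqref{eq:pop_block_gem_operator} performs gradient ascent only on $b_t$, and the contraction argument of Theorem~\ref{thm:population_contraction_param}(2) invokes gradient stability precisely for the $b_t$-coordinate. Under option~(i) of $G_t$ (the log-variance $\alpha_t$ held fixed), the $\alpha_t$-score never enters the updated coordinate, so the displayed bound \emph{is} the operative $\gamma$. If instead the variance is also gradient-driven, the $\alpha_t$-score $-\tfrac12+\tfrac12 e^{-\alpha_t}(T-b_t^\top X_{\pa(t)})^2$ produces, after the same cancellation of $\theta_t'$-independent terms, an extra contribution involving the posterior-variance difference $V_{\theta'}-V_{\theta^*}$ and a mean cross-term; these are governed by an analogous Lipschitz envelope on $(\mu_{\theta'},V_{\theta'})$ and would simply augment $\gamma$ additively. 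I would therefore close with a short remark making clear that the coefficient-only bound---the one relevant to our first-order method---should not be read as covering the variance coordinate for free.
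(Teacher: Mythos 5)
Your proposal is correct and follows essentially the same route as the paper's proof: both derive the coefficient-block gradient $\nabla_{b_t}\bar Q_t(\theta_t\mid\vartheta)=e^{-\alpha_t}\,\mathbb{E}[X_{\pa(t)}(\mu_{\vartheta}(X_{-t})-b_t^\top X_{\pa(t)})]$, exploit the cancellation of the $\theta_t'$-independent term, and then apply the triangle inequality, the Lipschitz envelope, and the bound $e^{-\alpha_t}\le 1/\Delta_{\min}$. The paper likewise restricts Assumption~\ref{assump:grad_stability} to the $b_t$-coordinate, so your closing caveat about the variance block matches the paper's own (parenthetical) acknowledgment of that restriction.
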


\begin{lemma}[Lipschitz conditional-mean map for one-missing-node Gaussian SEM]
\label{lem:lipschitz_mu}
Fix all frozen mechanisms \(\vartheta_{\setminus t}\) and consider the active block
\(\theta_t=(b_t,\alpha_t)\) in a neighborhood \(\mathbb{B}(\theta_t^\ast;r)\) with
\(\alpha_t\in[\log\Delta_{\min},\log\Delta_{\max}]\).
Let \(K(\theta_t)\) denote the implied precision matrix under the SEM parameters (with \(\vartheta_{\setminus t}\) fixed).
For the single missing coordinate \(T=X_t\), the Gaussian conditional mean admits the precision form
\[
\mu_{\theta_t}(x_{-t})
= m_t(\theta_t) - K_{tt}(\theta_t)^{-1}K_{t,-t}(\theta_t)\bigl(x_{-t}-m_{-t}(\theta_t)\bigr).
\]
Assume:
(i) \(K_{tt}(\theta_t)\ge c_K>0\) for all \(\theta_t\in \mathbb{B}(\theta_t^\ast;r)\), and
(ii) the map \(\theta_t\mapsto (m(\theta_t),K_{tt}(\theta_t),K_{t,-t}(\theta_t))\) is continuously differentiable on \(\mathbb{B}(\theta_t^\ast;r)\) with
\[
\sup_{\theta_t\in \mathbb{B}(\theta_t^\ast;r)}
\left\|\nabla_{\theta_t} m(\theta_t)\right\|_{\mathrm{op}} \le C_m,\qquad
\sup_{\theta_t\in \mathbb{B}(\theta_t^\ast;r)}
\left\|\nabla_{\theta_t}\!\left(K_{tt}(\theta_t)^{-1}K_{t,-t}(\theta_t)\right)\right\|_{\mathrm{op}} \le C_K.
\]
Then for all \(\theta_t,\theta'_t\in \mathbb{B}(\theta_t^\ast;r)\) and all \(x_{-t}\),
\[
\bigl|\mu_{\theta'_t}(x_{-t})-\mu_{\theta_t}(x_{-t})\bigr|
\;\le\; L_\mu(x_{-t}) \,\|\theta'_t-\theta_t\|_2,
\qquad
L_\mu(x_{-t}) := C_m + C_K\,\|x_{-t}-m_{-t}(\theta_t^\ast)\|_2.
\]
In particular, if \(X_{-t}\) has finite second moment under the target distribution (e.g., is sub-Gaussian),
then \(\mathbb{E}[L_\mu(X_{-t})]<\infty\) and the condition of Proposition~\ref{prop:gamma_bound} holds.
\end{lemma}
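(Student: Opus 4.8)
The plan is to treat $\theta_t \mapsto \mu_{\theta_t}(x_{-t})$, for each fixed $x_{-t}$, as a continuously differentiable map on the convex ball $\mathbb{B}(\theta_t^*;r)$ and to control its variation by a uniform gradient bound. Writing $w(\theta_t) := K_{tt}(\theta_t)^{-1}K_{t,-t}(\theta_t)$ for the (row-vector) regression weights, the conditional mean reads $\mu_{\theta_t}(x_{-t}) = m_t(\theta_t) - w(\theta_t)\bigl(x_{-t}-m_{-t}(\theta_t)\bigr)$, an affine function of $x_{-t}$ whose intercept and slope are smooth functions of $\theta_t$. First I would record the $C^1$-regularity: hypothesis (i), $K_{tt}(\theta_t)\ge c_K>0$, keeps $K_{tt}^{-1}$ differentiable and bounded away from a singularity, so $w$ inherits the continuous differentiability posited in (ii); and hypothesis (ii) then hands us the derivative bounds $C_m$ on $m$ and $C_K$ on $w$ \emph{directly}, which is precisely what makes the estimate clean.

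Because $\mathbb{B}(\theta_t^*;r)$ is convex, the segment $\xi(s)=\theta_t+s(\theta_t'-\theta_t)$, $s\in[0,1]$, stays in the ball, so the mean value inequality gives $|\mu_{\theta_t'}(x_{-t})-\mu_{\theta_t}(x_{-t})| \le \sup_{\theta_t\in\mathbb{B}(\theta_t^*;r)} \|\nabla_{\theta_t}\mu_{\theta_t}(x_{-t})\|_2 \,\|\theta_t'-\theta_t\|_2$. For transparent bookkeeping I would instead use the equivalent telescoping identity, obtained by adding and subtracting $w(\theta_t')\bigl(x_{-t}-m_{-t}(\theta_t)\bigr)$: $\mu_{\theta_t'}-\mu_{\theta_t} = \bigl[m_t(\theta_t')-m_t(\theta_t)\bigr] - w(\theta_t')\bigl[m_{-t}(\theta_t)-m_{-t}(\theta_t')\bigr] - \bigl[w(\theta_t')-w(\theta_t)\bigr]\bigl(x_{-t}-m_{-t}(\theta_t)\bigr)$. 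Bounding term by term, the mean-sensitivity pieces are controlled by $C_m$ and the weight-sensitivity piece by $C_K\,\|x_{-t}-m_{-t}(\theta_t)\|_2$.

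Two pieces of bookkeeping then produce the stated envelope. First, I would re-center the residual: since $\|m_{-t}(\theta_t)-m_{-t}(\theta_t^*)\|_2\le C_m r$ (mean value inequality again), $\|x_{-t}-m_{-t}(\theta_t)\|_2$ becomes $\|x_{-t}-m_{-t}(\theta_t^*)\|_2$ up to an additive $O(r)$ constant folded into $C_m$. Second, the cross term $w(\theta_t')\bigl[m_{-t}(\theta_t)-m_{-t}(\theta_t')\bigr]$ carries a factor $\|w(\theta_t')\|_2$, which is finite and uniformly bounded on the compact ball (by continuity, using $c_K>0$); absorbing $\sup_{\mathbb{B}}\|w\|_2\cdot C_m$ into the mean constant yields $L_\mu(x_{-t})=C_m+C_K\|x_{-t}-m_{-t}(\theta_t^*)\|_2$. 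Under the centering convention $m\equiv 0$ this cross term and $C_m$ vanish outright, leaving $L_\mu=C_K\|x_{-t}\|_2$, so the intercept term in $L_\mu$ is exactly the contribution of the mean mechanism. The integrability claim is then immediate: by the triangle inequality and Jensen, $\mathbb{E}[L_\mu(X_{-t})] \le C_m + C_K\,(\mathbb{E}\|X_{-t}-m_{-t}(\theta_t^*)\|_2^2)^{1/2}$, finite whenever $X_{-t}$ has finite second moment (in particular when sub-Gaussian), which verifies the hypothesis of Proposition~\ref{prop:gamma_bound}.

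I expect the main obstacle to be not analytic but a matter of constant bookkeeping: the honest product-rule expansion produces the $\|w\|_2 C_m$ cross term that the stated $L_\mu$ suppresses, so the real work lies in justifying its absorption (uniform boundedness of $w$ on the compact ball via $c_K>0$, or the centered case where it disappears) and in the $O(r)$ re-centering of the residual norm. Everything else is a routine chain- and product-rule estimate on a convex neighborhood.
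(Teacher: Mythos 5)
Your proposal is correct and follows essentially the same route as the paper's proof: write the conditional mean as an affine function of $x_{-t}$ with weight vector $w=K_{tt}^{-1}K_{t,-t}$, telescope the difference into a mean term, a weight-sensitivity term carrying $\|x_{-t}-m_{-t}(\cdot)\|_2$, and a cross term, then bound each using $C_m$, $C_K$, and the uniform bound $\sup_{\mathbb{B}}\|w\|_2<\infty$ from compactness and $K_{tt}\ge c_K$. The paper likewise re-centers the residual at $m_{-t}(\theta_t^\ast)$ and explicitly absorbs the resulting constant $C_0=C_m+C_A C_m+C_K C_m r$ into $C_m$, exactly the bookkeeping you flag as the only nontrivial step.
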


\subsection{High-Probability Sample-Level Concentration and Final Error Bound}\label{subsec:sample_concentration}

We now translate the population contraction result of Section~\ref{subsec:pop_contraction} into a finite-sample guarantee for our domain-adaptive (gradient-)EM updates on the active mechanism at node \(t\).
Consistent with Section~\ref{subsec:pop_contraction}--\ref{subsec:curvature}, we parameterize the active block as
\[
\theta_t=(b_t,\alpha_t),\qquad \alpha_t:=\log \sigma_t^2,
\]
keeping all source-invariant mechanisms fixed and analyzing the stochastic error induced by estimating the target-domain block-GEM update from \(N_{\mathrm{t}}\) unlabeled target samples.

\paragraph{Sample vs.\ population operators.}
Let \(G_t\) denote the \emph{population} block-GEM mapping on the active block (cf.\ \eqref{eq:pop_block_gem_operator}), and let \(\widehat{G}_t\) denote its finite-sample counterpart obtained by replacing population expectations with empirical averages (cf.\ Section~\ref{subsec:gradient_em}--\ref{subsec:sample_em}).
Concretely, \(\widehat{G}_t\) uses the sample parent moment \(\widehat{M}_{\pa(t)}\) and the imputed cross-moment \(\widehat{v}_t^{(r)}\), performs the same gradient-ascent step on the coefficient block \(b_t\), and uses the same choice of variance/log-variance update (kept fixed or updated in closed form with truncation).
We suppress the dependence on frozen mechanisms in the notation and treat them as fixed for the main argument.

\paragraph{Uniform deviation bound.}
To control the discrepancy \(\widehat{G}_t-G_t\) uniformly over the local basin, assume:
(i) \(X_{\pa(t)}\) is sub-Gaussian under the target distribution, and
(ii) the conditional-moment map \(x_{-t}\mapsto \mu_t(x_{-t};\theta_t)=\mathbb{E}_{\theta_t}[T\mid X_{-t}=x_{-t}]\) is uniformly Lipschitz in \(\theta_t\) over \(\mathbb{B}(\theta_t^*;r)\) with an envelope ensuring sub-Gaussian (or sub-exponential) tails for the random vectors \(X_{\pa(t)}\mu_t(X_{-t};\theta_t)\).
Under these standard regularity conditions, empirical-process concentration yields the uniform high-probability bound
\begin{equation}\label{eq:operator_dev_bound}
\sup_{\theta_t\in \mathbb{B}(\theta_t^*;r)}
\bigl\|\widehat{G}_t(\theta_t) - G_t(\theta_t)\bigr\|_2
\;\le\;
\delta_{N_{\mathrm{t}}},
\end{equation}
with probability at least \(1-\xi\), where
\[
\delta_{N_{\mathrm{t}}}
=
O\!\left(\sqrt{\frac{d_t+\log(1/\xi)}{N_{\mathrm{t}}}}\right),
\qquad
d_t:=\dim(b_t)+1.
\]
Here \(\dim(b_t)=|\pa(t)|\) without an intercept and \(\dim(b_t)=|\pa(t)|+1\) with an intercept, and the additional \(+1\) accounts for the log-variance parameter \(\alpha_t\).

\paragraph{Finite-sample convergence to a statistical neighborhood.}
Assume the population mapping \(G_t\) is \(\kappa\)-contractive on \(\mathbb{B}(\theta_t^*;r)\), i.e.,
\begin{equation}\label{eq:pop_contraction_kappa}
\|G_t(\theta_t)-\theta_t^*\|_2\le \kappa\|\theta_t-\theta_t^*\|_2,
\qquad \forall\,\theta_t\in\mathbb{B}(\theta_t^*;r),
\end{equation}
with \(0\le \kappa<1\) and \(G_t(\theta_t^*)=\theta_t^*\).
On the event \eqref{eq:operator_dev_bound}, the sample iterates \(\theta_t^{(r+1)}=\widehat{G}_t(\theta_t^{(r)})\) satisfy
\[
\|\theta_t^{(r+1)}-\theta_t^*\|_2
\le
\|\widehat{G}_t(\theta_t^{(r)})-G_t(\theta_t^{(r)})\|_2
+
\|G_t(\theta_t^{(r)})-\theta_t^*\|_2
\le
\delta_{N_{\mathrm{t}}}+\kappa\|\theta_t^{(r)}-\theta_t^*\|_2.
\]
Unrolling yields, for all \(r\ge 0\),
\begin{equation}\label{eq:final_error_bound}
\|\theta_t^{(r)}-\theta_t^*\|_2
\;\le\;
\kappa^{\,r}\,\|\theta_t^{(0)}-\theta_t^*\|_2
\;+\;
\frac{\delta_{N_{\mathrm{t}}}}{1-\kappa}.
\end{equation}

\paragraph{Basin invariance.}
Since the contraction in \eqref{eq:pop_contraction_kappa} is local, we require the iterates remain in \(\mathbb{B}(\theta_t^*;r)\).
A sufficient condition is that
\[
\|\theta_t^{(0)}-\theta_t^*\|_2 \;\le\; r-\frac{\delta_{N_{\mathrm{t}}}}{1-\kappa},
\qquad\text{and}\qquad
\frac{\delta_{N_{\mathrm{t}}}}{1-\kappa}\;<\; r,
\]
in which case \eqref{eq:final_error_bound} implies \(\|\theta_t^{(r)}-\theta_t^*\|_2\le r\) for all \(r\).

\paragraph{Remark (source estimation error).}
The bound above conditions on the frozen (source-invariant) mechanisms and treats them as fixed.
In practice, these mechanisms are estimated from \(N_{\mathrm{s}}\) source samples; under standard sub-Gaussian assumptions and a consistent DAG fit,
\(\|\vartheta_{\setminus t}^{(\mathrm{s})}-\vartheta_{\setminus t}^{*}\|=O_{\mathbb{P}}(N_{\mathrm{s}}^{-1/2})\) (up to dimension/log factors) in an appropriate Euclidean/operator norm.
Local Lipschitz dependence of the E-step moments on the frozen block then contributes an additional additive term of order \(O_{\mathbb{P}}(N_{\mathrm{s}}^{-1/2})\) to \eqref{eq:operator_dev_bound}, and hence to the statistical floor in \eqref{eq:final_error_bound}.

\paragraph{Implication for target imputation.}
Let \(\widehat{T}_{\theta_t}(x_{-t}) := \mathbb{E}_{\theta_t}[T\mid X_{-t}=x_{-t}]\) denote the model-based imputer (conditional mean).
Under the same regularity conditions used to establish \eqref{eq:operator_dev_bound}, this imputation map is locally Lipschitz in \(\theta_t\) on \(\mathbb{B}(\theta_t^*;r)\); that is, there exists a measurable function \(L_{\mathrm{imp}}(X_{-t})\) with \(\mathbb{E}[L_{\mathrm{imp}}(X_{-t})]<\infty\) such that
\[
\bigl|\widehat{T}_{\theta_t}(X_{-t})-\widehat{T}_{\theta_t^*}(X_{-t})\bigr|
\;\le\;
L_{\mathrm{imp}}(X_{-t})\,\|\theta_t-\theta_t^*\|_2,
\qquad \forall\,\theta_t\in\mathbb{B}(\theta_t^*;r).
\]
Consequently, combining this Lipschitz property with \eqref{eq:final_error_bound} yields a high-probability statistical guarantee for imputation error: it decays geometrically in the iteration index \(r\) up to a statistical floor of order
\(O\!\bigl(\delta_{N_{\mathrm{t}}}/(1-\kappa)\bigr)\) (and an additional \(O_{\mathbb{P}}(N_{\mathrm{s}}^{-1/2})\) floor from estimating frozen mechanisms), up to logarithmic factors.

\subsection{Other EM Variants with Geometric-Rate Guarantees}\label{subsec:other_variants}

Our main algorithm uses a first-order (gradient) M-step for scalability on the \emph{active} mechanism at \(T\).
It is natural to ask whether other EM-family updates also admit BWY-style \emph{local} geometric convergence in our Gaussian DAG setting when we (i) freeze all source-invariant mechanisms and (ii) restrict optimization to the shifted block
\[
\theta_t=(b_t,\alpha_t),\qquad \alpha_t:=\log\sigma_t^2.
\]
Under the same local curvature and stability assumptions used in Section~\ref{subsec:pop_contraction}--\ref{subsec:curvature}, several classical variants inherit analogous local contraction guarantees.
Below we summarize three representative examples and contrast their per-iteration costs in terms of the active-block dimension
\[
d:=\dim(b_t)+1,
\]
where \(\dim(b_t)=|\pa(t)|\) without an intercept and \(\dim(b_t)=|\pa(t)|+1\) with an intercept, and the additional \(+1\) accounts for \(\alpha_t\).

\paragraph{Exact EM (restricted to the active block).}
Consider the exact population EM operator
\(
F_t(\theta_t')=\arg\max_{\theta_t\in\Omega_t}\bar Q_t(\theta_t\mid \theta_t')
\)
with all other mechanisms frozen.
Under Assumptions~\ref{assump:local_sc_smooth}--\ref{assump:grad_stability}, \(F_t\) is contractive on \(\mathbb{B}(\theta_t^*;r)\) with factor \(\kappa=\gamma/\lambda<1\) (Theorem~\ref{thm:population_contraction_param}).
At the sample level, this corresponds to an ECM-style update \cite{meng1993maximum} that performs a \emph{closed-form} regression update for \(b_t\) (and a scalar closed-form update for \(\alpha_t\), equivalently for \(\sigma_t^2\)) using the imputed sufficient statistics.
Computationally, the dominant linear algebra is solving a \(\dim(b_t)\times \dim(b_t)\) linear system for \(b_t\), yielding per-iteration cost \(O(\dim(b_t)^3)\) in general (or \(O(\dim(b_t)^2)\) per iteration if a factorization of \(\widehat M_{\pa(t)}\) is cached and reused across iterations).

\paragraph{ECME (observed-likelihood maximization for selected coordinates).}
ECME \cite{liu1994ecme} replaces some conditional maximizations of the surrogate by direct maximization of the observed-data likelihood.
In our setting, one convenient instance keeps the E-step unchanged, updates \(b_t\) by the completed-data regression, and updates \(\alpha_t\) (equivalently \(\sigma_t^2\)) by maximizing the \emph{target observed-data} likelihood with respect to that coordinate (holding the remaining blocks fixed).
Under the same local curvature/stability conditions and standard regularity for the observed-likelihood coordinate update, the resulting mapping is locally contractive on \(\mathbb{B}(\theta_t^*;r)\).
Computationally, this update remains dominated by the \(\dim(b_t)\times \dim(b_t)\) linear solve, hence is \(O(\dim(b_t)^3)\) per iteration in the worst case.

\paragraph{PX-EM (parameter expansion; applicability outline).}
PX-EM \cite{liu1998parameter} introduces an expanded parameterization together with a deterministic reduction mapping back to the original parameter space, often improving practical convergence by reducing the effective fraction of missing information.
In our Gaussian DAG setting, a natural expansion can be restricted to the active mechanism at \(T\) (e.g., a scale expansion acting on \((b_t,\sigma_t^2)\) in the expanded space, followed by a smooth reduction map back to \((b_t,\alpha_t)\)).
Under additional regularity ensuring that the expansion--reduction mapping is smooth and locally invertible in a neighborhood of \(\theta_t^*\), one can apply the same local contraction logic to the reduced operator on \(\theta_t\).
A complete proof in our setting requires (i) verifying local invertibility of the reduction map and (ii) bounding the Jacobian of the reduced update to control the induced contraction factor; we outline these steps in the supplementary material.

\medskip
\noindent\textbf{Remark.}
All guarantees above are \emph{local}: they require initialization in a basin \(\mathbb{B}(\theta_t^*;r)\) and a positive complete-vs.-missing information gap (Section~\ref{subsec:curvature}).
The key modeling choice enabling such results for domain adaptation is the restriction to a \emph{local mechanism shift at \(T\)} and the corresponding block-restricted updates; when additional mechanisms shift, the active block expands and the same contraction framework can be applied provided the corresponding curvature and stability conditions continue to hold.

\section{Experimental Results}\label{sec:experiments}

We evaluate the proposed \emph{DAG-aware first-order (gradient) EM} procedure for imputing a designated target variable \(T\) that is systematically missing in the deployment (target) domain. Throughout, we assume a known Gaussian causal DAG and compare against (i) a \emph{fit-on-source} Gaussian Bayesian network baseline and (ii) a \emph{Kiiveri-style} EM implementation for Gaussian covariance-structure models with one latent node.
Our study includes (a) controlled simulations, where the ground-truth shift mechanism is known, (b) a higher-dimensional benchmark on the 64-node MAGIC-IRRI network, and (c) a real-data case study on single-cell signaling measurements (Sachs et al.). 

\noindent\textbf{Why we do not include importance weighting (IW).}
Importance weighting is designed for \emph{covariate shift}, where the conditional mechanism $p(T\mid X)$ remains invariant while $p(X)$ changes.
In our main setting of \emph{local mechanism shift at $T$}, the conditional $p_{\mathrm{tgt}}(T\mid X_{\pa(t)})$ itself changes across domains.
Consequently, reweighting labeled source samples alone---which are generated under the \emph{source} mechanism---cannot, by itself, identify the parameters of the \emph{target} mechanism.
Our approach instead adapts the active mechanism parameters by leveraging unlabeled target structure through the DAG, in particular the covariance information carried by observed descendants of $T$ when $T$ is systematically missing.

All experiments were run on a Windows workstation equipped with a 12th Gen Intel(R) Core(TM) i9-12900H 2.50\,GHz CPU.
Code to reproduce the experiments is available at \url{https://github.com/majavid/ICDM2025}.

\paragraph*{Evaluation protocol}
In all experiments, $T$ is hidden only in the target domain during training, but retained for evaluation.
We report MAE, RMSE, and $R^2$ on the imputed $T$.
Unless stated otherwise, MAE and RMSE are computed after z-score standardization of $T$ (using the source-domain mean and standard deviation), so errors are reported in standard-deviation units.

\subsection{Simulated Experiments}\label{subsec:sim_experiments}

\paragraph{Seven-node SEM and shift design.}
We revisit the motivating seven-node linear-Gaussian SEM from Section~\ref{sec:intro}, in which context variables \(C_1,C_2\) drive intermediate nodes \(Z\) and \(X\), which together with \(C_1\) determine the target node \(T\), and \(T\) influences outcomes \(P\) and \(Y\).
We generate a fully observed \emph{source} dataset and a \emph{target} dataset in which \(T\) is completely unobserved during training.

To align with our problem formulation, we consider two shift classes:
\begin{itemize}[leftmargin=*]
  \item \textbf{Covariate/root shift:} we modify the marginal distribution of a context/root variable (e.g., a large change in the mean/variance of $C_2$),
while keeping all \emph{non-root} conditional mechanisms $P(X_k\mid X_{\pa(k)})$ invariant.
  \item \textbf{Local mechanism shift at \(T\):} we modify only the conditional mechanism generating \(T\), i.e., we change the coefficients and/or intercept in the structural equation for \(T\) while keeping all other conditionals invariant (cf.\ Section~\ref{sec:problem}).\footnote{Changing only \(\Var(\varepsilon_T)\) does not affect \(\mathbb{E}[T\mid X_{\pa(t)}]\) in a linear-Gaussian SEM; thus mean-imputation improvements under ``target shift'' require a mechanism change in \(P(T\mid X_{\pa(t)})\).}
\end{itemize}

\paragraph{Methods compared.}
We compare:
(i) \textbf{Baseline (Fit-on-Source)}: fit the source-domain Gaussian BN/SEM parameters and impute \(T\) in the target using the source estimate without adaptation;
(ii) \textbf{Kiiveri EM}: a covariance-structure EM procedure treating \(T\) as latent in the target;
(iii) \textbf{1st-order EM (ours)}: our domain-adaptive gradient-EM update on the active mechanism at \(T\), freezing source-invariant mechanisms and iterating EM updates until convergence (typically a small number of iterations; see supplement).
In the seven-node SEM, we impute \(T\) from observed variables in \(X_{-t}\); When conditioning on descendants/correlated variables (i.e., using $X_{-t}$ beyond parents), updating the shifted \emph{root} marginals using unlabeled target data can improve the target covariance used in $\mathbb{E}[T\mid X_{-t}]$; this is the sense in which adaptation can help in our covariate/root shift setting.

\paragraph{Results.}
Table~\ref{tab:results10} reports average performance over 10 repetitions.
The fit-on-source baseline remains accurate under covariate shift but degrades substantially under local mechanism shift at \(T\), consistent with a mismatch in the conditional \(P(T\mid X_{\pa(t)})\).
Our 1st-order EM achieves consistently low MAE/RMSE and near-perfect \(R^2\) under both shift types, indicating that adapting only the shifted mechanism can recover near-oracle imputation accuracy.
In our implementation, the Kiiveri EM baseline often converges to numerically unstable or degenerate solutions under large shifts.

\begin{table}[!ht]
  \centering
  \caption{Average target-domain imputation error under covariate shift and local mechanism shift at \(T\) (10 repeats).}
  \label{tab:results10}
  \resizebox{.75\columnwidth}{!}{%
  \begin{tabular}{llccc}
    \toprule
    \textbf{Shift scenario} & \textbf{Method} & \textbf{MAE} & \textbf{RMSE} & \(\mathbf{R}^2\) \\
    \midrule
    \multirow{3}{*}{Covariate shift} 
      & Baseline (Fit-on-Source) & 0.7935 & 0.9945 & 0.9981 \\
      & Kiiveri EM               & 45.1882 & 45.1973 & –2.9821 \\
      & 1st-order EM             & \textbf{0.3299} & \textbf{0.4145} & \textbf{0.9997} \\
    \midrule
    \multirow{3}{*}{Mechanism shift at \(T\)}
      & Baseline (Fit-on-Source) & 6.0107 & 6.3333 & 0.9473 \\
      & Kiiveri EM               & 70.8294 & 72.1688 & –5.8331 \\
      & 1st-order EM             & \textbf{0.9312} & \textbf{1.0577} & \textbf{0.9985} \\
    \bottomrule
  \end{tabular}%
  }
\end{table}

\subsection{MAGIC-IRRI: High-Dimensional Gaussian DAG under Strong Interventions}\label{subsec:magic_irri}

We next evaluate on the 64-node MAGIC-IRRI Gaussian Bayesian network from Scutari (ICQG 2016), available via the BN repository.\footnote{Network structure and data: \url{https://www.bnlearn.com/bnrepository/}.}
We treat the published network as the causal DAG \(\mathcal{G}\), designate \texttt{HT} as the systematically missing target variable in the deployment domain, and simulate a shifted target domain by applying large marginal interventions to three observed variables:
\begin{itemize}[leftmargin=*]
  \item \textbf{G4156}: from \(N(0.7636,\,0.9721^2)\) to \(N(1.5,\,2.0^2)\),
  \item \textbf{G4573}: from \(N(0.1196,\,0.4744^2)\) to \(N(1.0,\,1.0^2)\),
  \item \textbf{G1533}: from \(N(0.8004,\,0.9803^2)\) to \(N(0,\,3.0^2)\).
\end{itemize}
These interventions change the marginal distribution of observed covariates and propagate through the DAG, inducing a substantial distribution shift in the joint law of \(X_{-t}\).
Although the \emph{structural mechanisms} may remain unchanged away from the interventions, the posterior \(\mathbb{E}[T\mid X_{-t}]\) depends on the target-domain covariance; consequently, imputing \(T\) using a source-fitted covariance can be strongly miscalibrated when conditioning on descendants and other correlated variables.

Table~\ref{tab:magic_irri_results} summarizes the imputation results.
The fit-on-source baseline performs poorly under these strong shifts (negative \(R^2\)), and Kiiveri EM provides only marginal improvement in this regime.
In contrast, our 1st-order EM substantially reduces MAE/RMSE and achieves a positive \(R^2\), indicating that a lightweight domain-adaptive covariance/mechanism correction can recover meaningful predictive power even in a high-dimensional, heavily perturbed Gaussian DAG.

\begin{table}[ht]
  \centering
  \caption{Imputation performance on the MAGIC-IRRI DAG under strong marginal interventions (target: \texttt{HT}).}
  \label{tab:magic_irri_results}
  \resizebox{.75\columnwidth}{!}{%
  \begin{tabular}{lccc}
    \toprule
    \textbf{Method} & \textbf{MAE} & \textbf{RMSE} & \(\mathbf{R}^2\) \\
    \midrule
    Baseline (Fit-on-Source) 
      & 9.3827  & 11.1872  & –0.0957 \\
    Kiiveri EM              
      & 8.8479  & 11.0771  & –0.0743 \\
    1st-order EM            
      & \textbf{5.5834}   & \textbf{7.0277}  & \textbf{0.5676} \\
    \bottomrule
  \end{tabular}%
  }
\end{table}

\begin{figure}[ht]
  \centering
  \subfloat[Baseline (Fit-on-Source)\label{fig:bn-cov-magic}]{
    \includegraphics[width=0.32\linewidth]{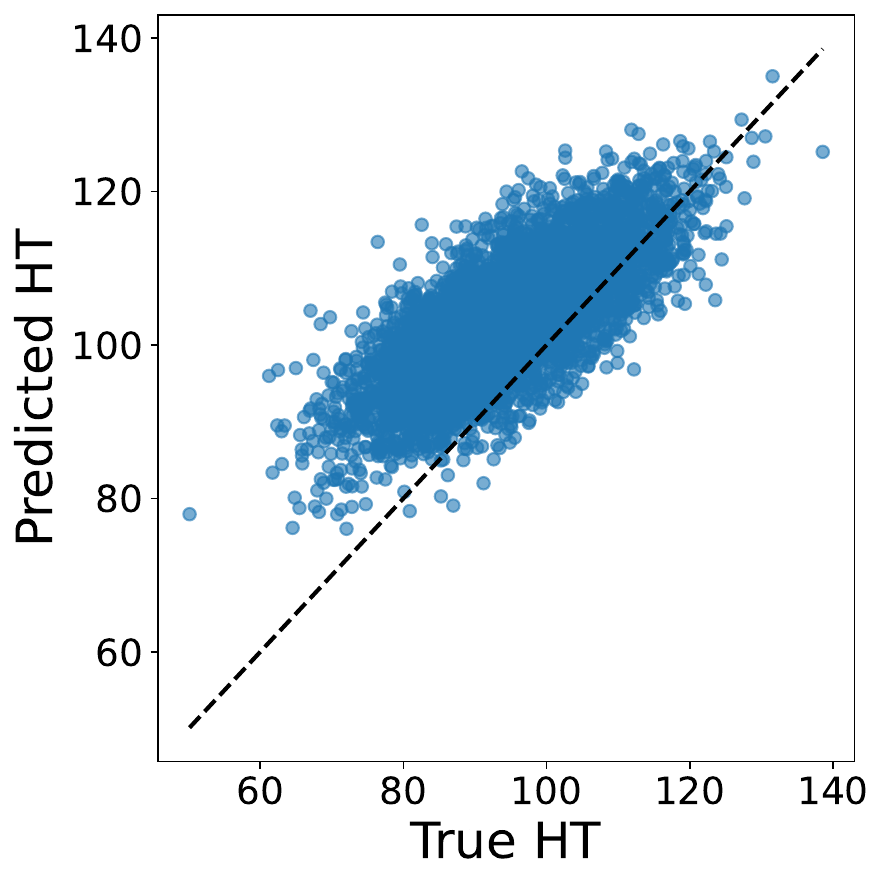}
  }%
  \subfloat[Kiiveri EM\label{fig:kiiv-cov-magic}]{
    \includegraphics[width=0.32\linewidth]{images/KiiveriCov.pdf}
  }%
  \subfloat[1st-order EM\label{fig:fo-cov-magic}]{
    \includegraphics[width=0.32\linewidth]{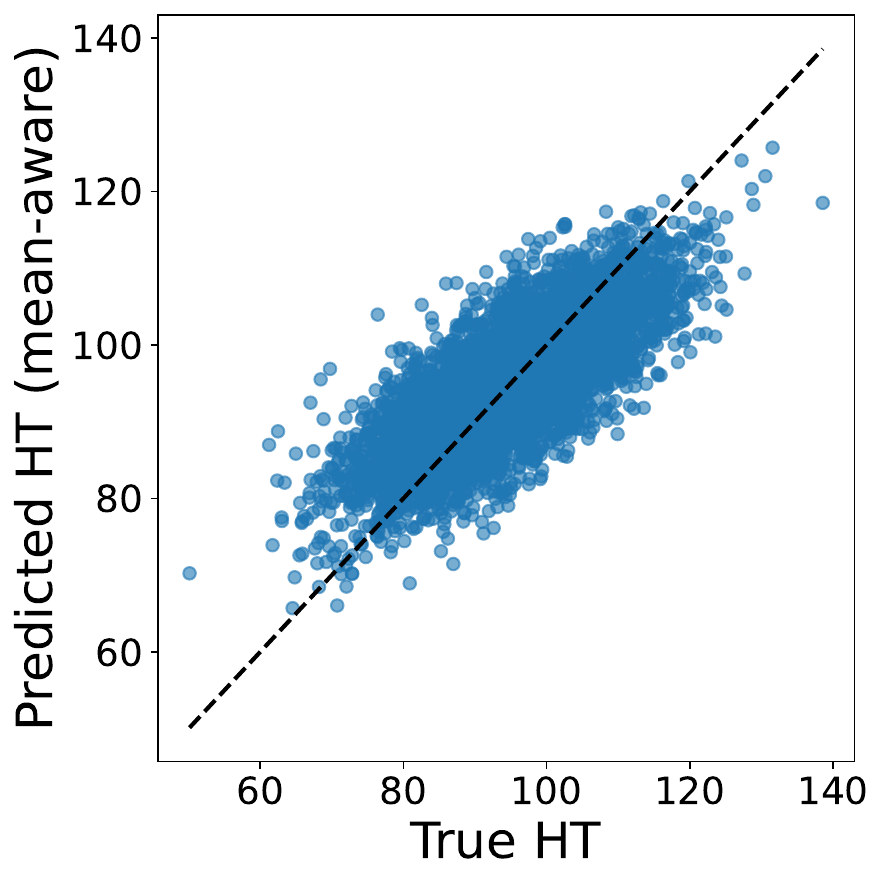}
  }
  \caption{True versus predicted \texttt{HT} under strong interventions for three methods:
  (a) fit-on-source baseline, (b) Kiiveri EM, (c) our 1st-order EM.}
  \label{fig:ht_scatter_all}
\end{figure}

Figure~\ref{fig:ht_scatter_all} visualizes the same comparison.
The fit-on-source baseline exhibits substantial bias and dispersion, consistent with negative \(R^2\).
Kiiveri EM shows signs of instability under this regime (predictions collapsing toward a narrow range).
Our 1st-order EM yields markedly better calibration around the \(y=x\) line, consistent with the improved error metrics.

\subsection{Real-Data Experiment: Single-Cell Signaling (Sachs et al.)}\label{subsec:sachs}

Finally, we evaluate on the single-cell flow cytometry dataset of Sachs et al.~\cite{sachs2005causal}, which measures phosphorylated signaling proteins in human primary CD4$^+$ T cells under multiple experimental conditions.
This dataset is a stringent test for transfer because interventions induce pronounced distribution shifts across conditions.
We designate the anti-CD3/CD28 stimulation condition (853 cells) as the source domain and the PMA stimulation condition (913 cells) as the target domain, and we treat each of ten proteins (Raf, Mek, Plcg, PIP$_2$, PIP$_3$, Erk, Akt, PKA, P38, Jnk) in turn as the target \(T\) that is systematically hidden in the target domain during training.

Table~\ref{tab:sachs_multi} reports target-domain imputation accuracy.
We observe strong gains for several proteins (notably Raf and Erk), indicating that the proposed procedure can leverage source information together with the target-domain observed distribution to improve posterior imputation under intervention-induced shift.
At the same time, for certain targets (e.g., Mek, PKA), performance deteriorates, yielding negative \(R^2\).
Such cases likely reflect violations of the modeling assumptions (non-Gaussianity, hidden confounding, and feedback), as well as mechanism changes that are not well captured by a linear-Gaussian DAG.
These results therefore provide both validation (where the assumptions are approximately met) and a clear motivation for robust extensions beyond linear-Gaussian DAGs.

\begin{table}[!t]
\caption{Imputation performance on the Sachs et al.\ data under domain shift (source: CD3/CD28; target: PMA).}
\label{tab:sachs_multi}
\centering
\small
\begin{tabular}{lrrr r}
\toprule
Target Variable & Method & MAE & RMSE & \(R^2\) \\
\midrule
Raf & Baseline (Fit-on-Source) & 0.6908 & 1.0015 & -0.0041 \\
    & 1st-order EM             & \textbf{0.4132} & \textbf{0.6393} & \textbf{0.5908} \\
    & Kiiveri EM               & 0.5132 & 0.8324 & 0.3064 \\
\midrule
Mek & Baseline (Fit-on-Source) & \textbf{0.3933} & \textbf{0.6383} & \textbf{0.5922} \\
    & 1st-order EM             & 0.7140 & 1.0701 & -0.1464 \\
    & Kiiveri EM               & 0.7143 & 1.0707 & -0.1476 \\
\midrule
Plcg & Baseline (Fit-on-Source) & 0.6529 & 0.9995 & -0.0000 \\
     & 1st-order EM             & 0.5858 & 0.9008 & 0.1876 \\
     & Kiiveri EM               & \textbf{0.5851} & \textbf{0.8998} & \textbf{0.1894} \\
\midrule
PIP2 & Baseline (Fit-on-Source) & 0.6156 & 0.8254 & 0.3180 \\
     & 1st-order EM             & 0.6156 & 0.8254 & 0.3180 \\
     & Kiiveri EM               & 0.6165 & 0.8263 & 0.3165 \\
\midrule
PIP3 & Baseline (Fit-on-Source) & 0.5240 & 0.9280 & 0.1378 \\
     & 1st-order EM             & 0.3809 & 0.8106 & 0.3422 \\
     & Kiiveri EM               & \textbf{0.3731} & \textbf{0.8049} & \textbf{0.3515} \\
\midrule
Erk  & Baseline (Fit-on-Source) & 0.5884 & 0.8379 & 0.2971 \\
     & 1st-order EM             & \textbf{0.1817} & \textbf{0.2855} & \textbf{0.9184} \\
     & Kiiveri EM               & 4.7827 & 6.8503 & -45.9786 \\
\midrule
Akt  & Baseline (Fit-on-Source) & 0.1744 & 0.2756 & 0.9240 \\
     & 1st-order EM             & 0.1744 & 0.2756 & 0.9240 \\
     & Kiiveri EM               & \textbf{0.1728} & \textbf{0.2742} & \textbf{0.9247} \\
\midrule
PKA  & Baseline (Fit-on-Source) & \textbf{0.6810} & \textbf{0.9992} & 0.0006 \\
     & 1st-order EM             & 0.7594 & 1.0982 & -0.2073 \\
     & Kiiveri EM               & 0.8031 & 1.1795 & -0.3927 \\
\midrule
P38  & Baseline (Fit-on-Source) & 0.2897 & 0.4543 & 0.7934 \\
     & 1st-order EM             & 0.2897 & 0.4543 & 0.7934 \\
     & Kiiveri EM               & 0.2897 & 0.4543 & 0.7934 \\
\midrule
Jnk  & Baseline (Fit-on-Source) & 0.6621 & 1.1185 & -0.2525 \\
     & 1st-order EM             & 0.6621 & 1.1185 & -0.2525 \\
     & Kiiveri EM               & 0.6620 & 1.1185 & -0.2525 \\
\bottomrule
\end{tabular}
\end{table}

\section{Conclusion}

We studied the problem of imputing a designated target variable \(T\) that is systematically missing in a shifted deployment domain, leveraging a known Gaussian causal DAG learned from fully observed source data. We proposed a DAG-aware first-order (gradient) EM framework that performs a \emph{block-local} update: it freezes source-invariant mechanisms and adapts only the conditional mechanism of \(T\) using unlabeled target observations and the covariance information propagated through observed descendants. Under BWY-style local regularity conditions (strong concavity/smoothness and a complete--vs.--missing information spectral gap), we established local geometric convergence of the population operator and high-probability sample-level convergence to a statistical neighborhood, yielding finite-sample guarantees for target imputation.

Empirically, across a synthetic seven-node SEM, the 64-node MAGIC-IRRI network, and the Sachs single-cell signaling data, the proposed method consistently improves target-domain imputation over a fit-on-source Bayesian network and a Kiiveri-style EM baseline, especially under pronounced shifts. Importantly, our updates operate in the DAG parameter space and require only local sufficient statistics, making the procedure scalable in high-dimensional graphs.

Several directions remain open. First, extending the framework from a single systematically missing node to \emph{multiple} missing/latent nodes will require blockwise E-steps and careful control of the resulting missing-information fraction. Second, relaxing causal sufficiency and accommodating latent confounding or selection bias (e.g., via ADMGs/ancestral graphs) would broaden applicability, but demands new conditional-moment computations and corresponding contraction analyses. Finally, developing guarantees under \emph{model misspecification}---including nonlinear mechanisms, feedback effects, or non-Gaussian noise as suggested by some signaling targets---is an important step toward robust deployment in complex scientific systems.

\appendix
\section*{Proof of Main Theoretical Results}
\begin{proof}[Proof of Lemma \ref{lem:gem_ascent}]
Fix $\vartheta^{(r)}$ and hold $\sigma_t^2$ fixed at $(\sigma_t^2)^{(r)}$. Conditioned on $\vartheta^{(r)}$, the E-step moments
$\{\mu_t^{(r)}(x_{-t}^{(i)}),V_t^{(r)}\}_{i=1}^n$ are treated as constants in the M-step surrogate.
In the Gaussian SEM, the only part of $\widehat Q(\vartheta\mid \vartheta^{(r)})$ that depends on $b_t$ is the quadratic regression term
induced by the conditional $T\mid X_{\pa(t)}$. Using the imputed sufficient statistics in \eqref{eq:imputed_stats}, we can write, up to an
additive constant independent of $b_t$,
\begin{equation}\label{eq:Qhat_quadratic_bt}
\widehat Q(b_t\mid \vartheta^{(r)})
=
\frac{1}{(\sigma_t^2)^{(r)}}\left(
b_t^\top \widehat v_t^{(r)} - \frac12\, b_t^\top \widehat M_{\pa(t)}\, b_t
\right)
+\text{const},
\end{equation}
where
$\widehat M_{\pa(t)}=\frac1n\sum_{i=1}^n x_{\pa(t)}^{(i)}x_{\pa(t)}^{(i)\top}$ and
$\widehat v_t^{(r)}=\frac1n\sum_{i=1}^n x_{\pa(t)}^{(i)}\,\mu_t^{(r)}(x_{-t}^{(i)})$ as in \eqref{eq:imputed_stats}.

Differentiating \eqref{eq:Qhat_quadratic_bt} yields the gradient in \eqref{eq:grad_bt}:
\[
\nabla_{b_t}\widehat Q(b_t\mid \vartheta^{(r)})
=
\frac{1}{(\sigma_t^2)^{(r)}}\left(\widehat v_t^{(r)}-\widehat M_{\pa(t)}b_t\right),
\]
and the Hessian is the constant matrix
\[
\nabla^2_{b_t}\widehat Q(b_t\mid \vartheta^{(r)})
=
-\frac{1}{(\sigma_t^2)^{(r)}}\,\widehat M_{\pa(t)}.
\]
Since $\widehat M_{\pa(t)}\succeq 0$, the Hessian is negative semidefinite, hence $\widehat Q(\cdot\mid \vartheta^{(r)})$ is concave in $b_t$.
Moreover, the gradient is Lipschitz with constant equal to the operator norm of the Hessian:
\[
\|\nabla_{b_t}\widehat Q(b)-\nabla_{b_t}\widehat Q(b')\|_2
\le
\left\|\nabla^2_{b_t}\widehat Q\right\|_{\mathrm{op}}\,\|b-b'\|_2
=
\frac{\lambda_{\max}(\widehat M_{\pa(t)})}{(\sigma_t^2)^{(r)}}\,\|b-b'\|_2,
\]
so $\widehat Q(\cdot\mid \vartheta^{(r)})$ is $L^{(r)}$-smooth with
\[
L^{(r)}=\frac{\lambda_{\max}(\widehat M_{\pa(t)})}{(\sigma_t^2)^{(r)}}.
\]

Finally, for a concave function with $L^{(r)}$-Lipschitz gradient, the standard smoothness inequality implies that for the gradient-ascent update
$b_t^{(r+1)}=b_t^{(r)}+\eta_r \nabla_{b_t}\widehat Q(b_t^{(r)}\mid \vartheta^{(r)})$ with $0<\eta_r\le 2/L^{(r)}$,
\[
\widehat Q(b_t^{(r+1)}\mid \vartheta^{(r)})
\ge
\widehat Q(b_t^{(r)}\mid \vartheta^{(r)})
+\left(\eta_r-\frac{L^{(r)}\eta_r^2}{2}\right)\left\|\nabla_{b_t}\widehat Q(b_t^{(r)}\mid \vartheta^{(r)})\right\|_2^2
\ge
\widehat Q(b_t^{(r)}\mid \vartheta^{(r)}),
\]
since $\eta_r-\frac{L^{(r)}\eta_r^2}{2}\ge 0$ when $\eta_r\le 2/L^{(r)}$.
Thus the one-step update is monotone ascent on the surrogate and hence defines a valid GEM step \cite{Dempster1977,Wu1983}.
\end{proof}

\begin{proof}[Proof of Theorem~\ref{thm:population_contraction_param}]
Throughout, work on the ball $\mathbb{B}(\theta_t^*;r)$ where the assumptions hold.

\paragraph{(1) Exact EM operator.}
Fix any $\theta_t\in \mathbb{B}(\theta_t^*;r)$ and define
\[
F_t(\theta_t)\in\arg\max_{\theta\in\mathbb{B}(\theta_t^*;r)} \bar Q_t(\theta\mid\theta_t).
\]
By Assumption~\ref{assump:local_sc_smooth}, $\theta\mapsto \bar Q_t(\theta\mid\theta_t)$ is $\lambda$-strongly concave on the ball,
so the maximizer is unique and satisfies the first-order optimality condition
\begin{equation}\label{eq:FOC_Ft_app_rev}
\nabla_1 \bar Q_t(F_t(\theta_t)\mid \theta_t)=0.
\end{equation}
Also, $\theta_t^*$ is a population stationary point, so
\begin{equation}\label{eq:FOC_star_app_rev}
\nabla_1 \bar Q_t(\theta_t^*\mid \theta_t^*)=0.
\end{equation}

Consider
\[
0-\nabla_1 \bar Q_t(\theta_t^*\mid \theta_t)
=
\nabla_1 \bar Q_t(F_t(\theta_t)\mid \theta_t)-\nabla_1 \bar Q_t(\theta_t^*\mid \theta_t),
\]
using \eqref{eq:FOC_Ft_app_rev}. Taking inner product with $F_t(\theta_t)-\theta_t^*$ and applying $\lambda$-strong concavity
in the first argument yields
\[
\Big\langle \nabla_1 \bar Q_t(F_t(\theta_t)\mid \theta_t)-\nabla_1 \bar Q_t(\theta_t^*\mid \theta_t),\;
F_t(\theta_t)-\theta_t^* \Big\rangle
\le -\lambda\|F_t(\theta_t)-\theta_t^*\|_2^2.
\]
By Cauchy--Schwarz,
\[
\Big\langle -\nabla_1 \bar Q_t(\theta_t^*\mid \theta_t),\; F_t(\theta_t)-\theta_t^* \Big\rangle
\le \|\nabla_1 \bar Q_t(\theta_t^*\mid \theta_t)\|_2\,\|F_t(\theta_t)-\theta_t^*\|_2.
\]
Combining gives
\[
\lambda\|F_t(\theta_t)-\theta_t^*\|_2 \le \|\nabla_1 \bar Q_t(\theta_t^*\mid \theta_t)\|_2.
\]
Add and subtract $\nabla_1 \bar Q_t(\theta_t^*\mid \theta_t^*)=0$ and apply Assumption~\ref{assump:grad_stability}:
\[
\|\nabla_1 \bar Q_t(\theta_t^*\mid \theta_t)\|_2
=
\|\nabla_1 \bar Q_t(\theta_t^*\mid \theta_t)-\nabla_1 \bar Q_t(\theta_t^*\mid \theta_t^*)\|_2
\le \gamma \|\theta_t-\theta_t^*\|_2.
\]
Therefore,
\[
\|F_t(\theta_t)-\theta_t^*\|_2 \le (\gamma/\lambda)\,\|\theta_t-\theta_t^*\|_2,
\]
which proves contraction. The fixed-point and geometric convergence follow by Banach’s theorem.

\paragraph{(2) Block first-order / gradient-EM coefficient update.}
Let $b_t^+=b_t+\eta\nabla_{b_t}\bar Q_t(\theta_t\mid\theta_t)$ with $0<\eta\le 1/\mu$.
Add and subtract $\nabla_{b_t}\bar Q_t(\theta_t\mid\theta_t^*)$:
\begin{align}
\|b_t^+-b_t^*\|_2
&\le
\underbrace{\Big\|b_t-b_t^* + \eta\big(\nabla_{b_t}\bar Q_t(\theta_t\mid\theta_t^*)
-\nabla_{b_t}\bar Q_t(\theta_t^*\mid\theta_t^*)\big)\Big\|_2}_{(\star)}
+
\eta\underbrace{\Big\|\nabla_{b_t}\bar Q_t(\theta_t\mid\theta_t)
-\nabla_{b_t}\bar Q_t(\theta_t\mid\theta_t^*)\Big\|_2}_{(\dagger)}.
\label{eq:bt_split_app_rev}
\end{align}

\emph{Control of $(\star)$.}
Fix $\alpha_t$ and define $g(b):=\bar Q_t((b,\alpha_t)\mid\theta_t^*)$.
By Assumption~\ref{assump:local_sc_smooth}, $g$ is $\lambda$-strongly concave and $\mu$-smooth in $b$ on the ball.
Hence for $0<\eta\le 1/\mu$, the gradient-ascent map $b\mapsto b+\eta\nabla g(b)$ is a contraction with factor $(1-\eta\lambda)$, so
\[
(\star)\le (1-\eta\lambda)\,\|b_t-b_t^*\|_2.
\]

\emph{Control of $(\dagger)$.}
Apply Assumption~\ref{assump:grad_stability} with $\theta_t'=\theta_t$:
\begin{equation}\label{eq:dagger_bound_app_rev}
(\dagger)
=
\Big\|\nabla_{b_t}\bar Q_t(\theta_t\mid\theta_t)
-\nabla_{b_t}\bar Q_t(\theta_t\mid\theta_t^*)\Big\|_2
\le \gamma\,\|\theta_t-\theta_t^*\|_2.
\end{equation}

Combining the last three displays and using $\|b_t-b_t^*\|_2\le \|\theta_t-\theta_t^*\|_2$ gives
\[
\|b_t^+-b_t^*\|_2
\le
(1-\eta\lambda)\,\|\theta_t-\theta_t^*\|_2+\eta\gamma\,\|\theta_t-\theta_t^*\|_2
=
\bigl(1-\eta(\lambda-\gamma)\bigr)\,\|\theta_t-\theta_t^*\|_2,
\]
as claimed.

Finally, if the $\alpha_t$-update is itself contractive with factor $\rho_\alpha<1$ on the ball, then under the product Euclidean norm,
\[
\|G_t(\theta_t)-\theta_t^*\|_2
=
\big\|(b_t^+,\alpha_t^+)-(b_t^*,\alpha_t^*)\big\|_2
\le
\max\{1-\eta(\lambda-\gamma),\rho_\alpha\}\,\|\theta_t-\theta_t^*\|_2,
\]
so $G_t$ is contractive.
\end{proof}

\begin{proof}[Proof of Lemma \ref{lem:lambda_mu}]
Fix any $\theta_t' \in \mathbb{B}(\theta_t^*;r)$ and write $\theta_t=(b_t,\alpha_t)$ with
$\sigma_t^2 := e^{\alpha_t}\in[\Delta_{\min},\Delta_{\max}]$ by assumption.  For the local linear-Gaussian mechanism
$T\mid X_{\pa(t)} \sim \mathcal{N}(b_t^\top X_{\pa(t)},\,\sigma_t^2)$, the (population) EM surrogate restricted to block $t$
can be written (up to additive terms independent of $(b_t,\alpha_t)$) as
\begin{equation}\label{eq:Qbar_local_form}
\bar Q_t(b_t,\alpha_t \mid \theta_t')
\;=\;
-\frac12\,\mathbb{E}\!\left[
\alpha_t \;+\; e^{-\alpha_t}\,\widetilde r_t(b_t;\theta_t')^2
\right] \;+\; \text{const}(\theta_t'),
\end{equation}
where $\widetilde r_t(b_t;\theta_t')$ is the E-step residual (completed-data moment) and denotes the (population) residual random variable appearing in the surrogate
(e.g., the E-step conditional second moment of $T-b_t^\top X_{\pa(t)}$ given the observed variables, under $\theta_t'$).
Crucially, for fixed $\theta_t'$, $\theta_t\mapsto \bar Q_t(\theta_t\mid\theta_t')$ is twice differentiable and its curvature
in $(b_t,\alpha_t)$ is determined by the second derivatives of the right-hand side of \eqref{eq:Qbar_local_form}.

\paragraph{Curvature in the $b_t$-coordinate.}
Differentiating \eqref{eq:Qbar_local_form} with respect to $b_t$ gives
\[
\nabla_{b_t}\bar Q_t(b_t,\alpha_t\mid\theta_t')
=
e^{-\alpha_t}\,\mathbb{E}\!\left[X_{\pa(t)}\,\widetilde r_t(b_t;\theta_t')\right],
\]
and the Hessian in $b_t$ is the constant (in $b_t$) negative semidefinite matrix
\[
\nabla^2_{b_tb_t}\bar Q_t(b_t,\alpha_t\mid\theta_t')
=
-\,e^{-\alpha_t}\,\mathbb{E}\!\left[X_{\pa(t)}X_{\pa(t)}^\top\right].
\]
By the moment bounds $mI\preceq \mathbb{E}[X_{\pa(t)}X_{\pa(t)}^\top]\preceq MI$ and the variance bounds
$e^{-\alpha_t}\in[1/\Delta_{\max},\,1/\Delta_{\min}]$, we obtain the uniform spectral bounds
\[
-\frac{M}{\Delta_{\min}}I
\;\preceq\;
\nabla^2_{b_tb_t}\bar Q_t(b_t,\alpha_t\mid\theta_t')
\;\preceq\;
-\frac{m}{\Delta_{\max}}I,
\]
which implies $b_t\mapsto \bar Q_t(b_t,\alpha_t\mid\theta_t')$ is $\lambda_b$-strongly concave and $\mu_b$-smooth with
\[
\lambda_b \;\ge\; \frac{m}{\Delta_{\max}},
\qquad
\mu_b \;\le\; \frac{M}{\Delta_{\min}}.
\]

\paragraph{Curvature in the $\alpha_t$-coordinate.}
For fixed $b_t$, differentiate \eqref{eq:Qbar_local_form} with respect to $\alpha_t$:
\[
\partial_{\alpha_t}\bar Q_t(b_t,\alpha_t\mid\theta_t')
=
-\frac12 \;+\; \frac12\,e^{-\alpha_t}\,\mathbb{E}\!\left[\widetilde r_t(b_t;\theta_t')^2\right],
\]
and
\[
\partial_{\alpha_t}^2\bar Q_t(b_t,\alpha_t\mid\theta_t')
=
-\frac12\,e^{-\alpha_t}\,\mathbb{E}\!\left[\widetilde r_t(b_t;\theta_t')^2\right]
\;\le\;0.
\]
By the assumed uniform residual-moment bounds
$0<v_{\min}\le \mathbb{E}[\widetilde r_t(b_t;\theta_t')^2]\le v_{\max}<\infty$ on the ball (for all $\theta_t$) and again
$e^{-\alpha_t}\in[1/\Delta_{\max},\,1/\Delta_{\min}]$, we obtain
\[
-\frac12\,\frac{v_{\max}}{\Delta_{\min}}
\;\le\;
\partial_{\alpha_t}^2\bar Q_t(b_t,\alpha_t\mid\theta_t')
\;\le\;
-\frac12\,\frac{v_{\min}}{\Delta_{\max}}.
\]
Hence $\alpha_t\mapsto \bar Q_t(b_t,\alpha_t\mid\theta_t')$ is $\lambda_\alpha$-strongly concave and $\mu_\alpha$-smooth with
\[
\lambda_\alpha \;\ge\; \frac12\,\frac{v_{\min}}{\Delta_{\max}},
\qquad
\mu_\alpha \;\le\; \frac12\,\frac{v_{\max}}{\Delta_{\min}}.
\]

\paragraph{From blockwise to full-block curvature (Schur complement).}
Let $H(\theta_t;\theta_t'):=\nabla^2_{\theta_t\theta_t}\bar Q_t(\theta_t\mid\theta_t')$ and write it in block form
\[
H(\theta_t;\theta_t')
=
\begin{pmatrix}
H_{bb} & H_{b\alpha}\\
H_{\alpha b} & H_{\alpha\alpha}
\end{pmatrix},
\qquad
H_{bb}=\nabla^2_{b_tb_t}\bar Q_t,\;\;
H_{\alpha\alpha}=\partial_{\alpha_t}^2\bar Q_t,\;\;
H_{b\alpha}=\nabla^2_{b_t\alpha_t}\bar Q_t.
\]
From the bounds above, uniformly on the ball,
\[
H_{bb}\preceq -\lambda_b I,\qquad H_{\alpha\alpha}\le -\lambda_\alpha,
\qquad
\|H_{b\alpha}\|_2 \le \rho.
\]
If $\rho^2<\lambda_b\lambda_\alpha$, then by a standard Schur-complement argument the whole Hessian is uniformly negative definite
on $\mathbb{B}(\theta_t^*;r)$; for example one may take the strong concavity constant
\[
\lambda \;:=\; \frac12\Big(\lambda_b+\lambda_\alpha-\sqrt{(\lambda_b-\lambda_\alpha)^2+4\rho^2}\Big) \;>\; 0,
\]
so that $H(\theta_t;\theta_t')\preceq -\lambda I$ on the ball. Similarly, using the upper smoothness bounds
$\|H_{bb}\|_{\mathrm{op}}\le \mu_b$, $|H_{\alpha\alpha}|\le \mu_\alpha$, and $\|H_{b\alpha}\|_2\le \rho$, one can take
\[
\mu \;:=\; \frac12\Big(\mu_b+\mu_\alpha+\sqrt{(\mu_b-\mu_\alpha)^2+4\rho^2}\Big)
\]
to obtain $\|H(\theta_t;\theta_t')\|_{\mathrm{op}}\le \mu$ uniformly on the ball. Therefore,
Assumption~\ref{assump:local_sc_smooth} holds for the full block $\theta_t=(b_t,\alpha_t)$ with constants depending on
$(\lambda_b,\mu_b,\lambda_\alpha,\mu_\alpha,\rho)$.
\end{proof}

\begin{proof}[Proof of Proposition \ref{prop:gamma_bound}]
Recall Assumption~\ref{assump:grad_stability} (restricted to the $b_t$-coordinate) requires that for all
$\theta,\theta'\in\mathbb{B}(\theta_t^*;r)$,
\[
\bigl\|\nabla_{b_t}\bar Q_t(\theta_t\mid \theta')-\nabla_{b_t}\bar Q_t(\theta_t\mid \theta)\bigr\|_2
\;\le\; \gamma\,\|\theta'-\theta\|_2,
\]
uniformly for $\theta_t\in\mathbb{B}(\theta_t^*;r)$.

Fix $\theta,\theta'\in\mathbb{B}(\theta_t^*;r)$ and any $\theta_t=(b_t,\alpha_t)\in\mathbb{B}(\theta_t^*;r)$.
For the local linear-Gaussian mechanism, the population surrogate gradient in $b_t$ has the form
\begin{equation}\label{eq:gradQ_b_form}
\nabla_{b_t}\bar Q_t(\theta_t\mid \vartheta)
=
e^{-\alpha_t}\,\mathbb{E}\!\left[X_{\pa(t)}\Big(\mu_{\vartheta}(X_{-t})-b_t^\top X_{\pa(t)}\Big)\right],
\end{equation}
where $\mu_{\vartheta}(x_{-t})=\mathbb{E}_{\vartheta}[T\mid X_{-t}=x_{-t}]$ denotes the E-step conditional mean under parameter
$\vartheta$ (and the expectation is over the population distribution of $X$).

Subtracting \eqref{eq:gradQ_b_form} at $\vartheta=\theta'$ and $\vartheta=\theta$ cancels the $b_t^\top X_{\pa(t)}$ term, yielding
\[
\nabla_{b_t}\bar Q_t(\theta_t\mid \theta')-\nabla_{b_t}\bar Q_t(\theta_t\mid \theta)
=
e^{-\alpha_t}\,\mathbb{E}\!\left[X_{\pa(t)}\big(\mu_{\theta'}(X_{-t})-\mu_{\theta}(X_{-t})\big)\right].
\]
Taking norms and applying Jensen / triangle inequality gives
\[
\bigl\|\nabla_{b_t}\bar Q_t(\theta_t\mid \theta')-\nabla_{b_t}\bar Q_t(\theta_t\mid \theta)\bigr\|_2
\;\le\;
e^{-\alpha_t}\,\mathbb{E}\!\left[\|X_{\pa(t)}\|_2\,\big|\mu_{\theta'}(X_{-t})-\mu_{\theta}(X_{-t})\big|\right].
\]
By the envelope Lipschitz condition in the proposition,
\[
\big|\mu_{\theta'}(x_{-t})-\mu_{\theta}(x_{-t})\big|
\;\le\; L_\mu(x_{-t})\,\|\theta'-\theta\|_2
\quad\forall x_{-t},
\]
so
\[
\bigl\|\nabla_{b_t}\bar Q_t(\theta_t\mid \theta')-\nabla_{b_t}\bar Q_t(\theta_t\mid \theta)\bigr\|_2
\;\le\;
e^{-\alpha_t}\,\mathbb{E}\!\left[\|X_{\pa(t)}\|_2\,L_\mu(X_{-t})\right]\;\|\theta'-\theta\|_2.
\]
On $\mathbb{B}(\theta_t^*;r)$ we have $\alpha_t\ge \alpha_{\min}:=\log\Delta_{\min}$, hence
$e^{-\alpha_t}\le e^{-\alpha_{\min}} = 1/\Delta_{\min}$. Therefore, uniformly over $\theta_t$ in the ball,
\[
\bigl\|\nabla_{b_t}\bar Q_t(\theta_t\mid \theta')-\nabla_{b_t}\bar Q_t(\theta_t\mid \theta)\bigr\|_2
\;\le\;
e^{-\alpha_{\min}}\;\mathbb{E}\!\left[\|X_{\pa(t)}\|_2\,L_\mu(X_{-t})\right]\;\|\theta'-\theta\|_2
\;\le\;
\frac{1}{\Delta_{\min}}\;\mathbb{E}\!\left[\|X_{\pa(t)}\|_2\,L_\mu(X_{-t})\right]\;\|\theta'-\theta\|_2.
\]
Thus Assumption~\ref{assump:grad_stability} holds with
\[
\gamma \;\le\; e^{-\alpha_{\min}}\;\mathbb{E}\!\left[\|X_{\pa(t)}\|_2\,L_\mu(X_{-t})\right]
\;\le\;
\frac{1}{\Delta_{\min}}\;\mathbb{E}\!\left[\|X_{\pa(t)}\|_2\,L_\mu(X_{-t})\right],
\]
as claimed.
\end{proof}

\begin{proof}[Proof of Lemma \ref{lem:lipschitz_mu}]
Write
\[
A(\theta_t)\;:=\;K_{tt}(\theta_t)^{-1}K_{t,-t}(\theta_t)\in\mathbb{R}^{1\times (p-1)}.
\]
Then the conditional mean can be written as
\[
\mu_{\theta_t}(x_{-t})
=
m_t(\theta_t)-A(\theta_t)\bigl(x_{-t}-m_{-t}(\theta_t)\bigr).
\]
Fix $\theta_t,\theta_t'\in\mathbb{B}(\theta_t^\ast;r)$ and abbreviate $x:=x_{-t}$.
Add and subtract $m_{-t}(\theta_t^\ast)$ to isolate the $x$-dependence:
\begin{align*}
\mu_{\theta_t}(x)
&=
m_t(\theta_t)-A(\theta_t)\bigl(x-m_{-t}(\theta_t^\ast)\bigr)
\;+\;
A(\theta_t)\bigl(m_{-t}(\theta_t)-m_{-t}(\theta_t^\ast)\bigr).
\end{align*}
Hence
\begin{align*}
\mu_{\theta_t'}(x)-\mu_{\theta_t}(x)
&=
\underbrace{\bigl(m_t(\theta_t')-m_t(\theta_t)\bigr)}_{(I)}
\;-\;
\underbrace{\bigl(A(\theta_t')-A(\theta_t)\bigr)\bigl(x-m_{-t}(\theta_t^\ast)\bigr)}_{(II)}\\
&\quad
+\underbrace{\Big(A(\theta_t')\bigl(m_{-t}(\theta_t')-m_{-t}(\theta_t^\ast)\bigr)
      -A(\theta_t)\bigl(m_{-t}(\theta_t)-m_{-t}(\theta_t^\ast)\bigr)\Big)}_{(III)}.
\end{align*}
We bound each term.

\emph{Term (I).} By the mean value theorem and the bound
$\sup_{\theta_t\in\mathbb{B}(\theta_t^\ast;r)}\|\nabla_{\theta_t}m(\theta_t)\|_{\mathrm{op}}\le C_m$,
\[
|m_t(\theta_t')-m_t(\theta_t)|\;\le\;\|m(\theta_t')-m(\theta_t)\|_2
\;\le\; C_m\|\theta_t'-\theta_t\|_2.
\]

\emph{Term (II).} Using the mean value theorem and the bound
$\sup_{\theta_t\in\mathbb{B}(\theta_t^\ast;r)}\|\nabla_{\theta_t}A(\theta_t)\|_{\mathrm{op}}\le C_K$,
\[
\|A(\theta_t')-A(\theta_t)\|_2 \;\le\; C_K\|\theta_t'-\theta_t\|_2,
\]
hence
\[
|(II)|
\;\le\;
\|A(\theta_t')-A(\theta_t)\|_2\,\|x-m_{-t}(\theta_t^\ast)\|_2
\;\le\;
C_K\|x-m_{-t}(\theta_t^\ast)\|_2\,\|\theta_t'-\theta_t\|_2.
\]

\emph{Term (III).} First note that $A(\cdot)$ is continuous on the compact set
$\mathbb{B}(\theta_t^\ast;r)$ and $K_{tt}(\theta_t)\ge c_K>0$ on the ball, so
\[
C_A:=\sup_{\theta_t\in\mathbb{B}(\theta_t^\ast;r)}\|A(\theta_t)\|_2 < \infty.
\]
Now add and subtract $A(\theta_t')\bigl(m_{-t}(\theta_t)-m_{-t}(\theta_t^\ast)\bigr)$ to get
\begin{align*}
(III)
&=
A(\theta_t')\bigl(m_{-t}(\theta_t')-m_{-t}(\theta_t)\bigr)
+\bigl(A(\theta_t')-A(\theta_t)\bigr)\bigl(m_{-t}(\theta_t)-m_{-t}(\theta_t^\ast)\bigr).
\end{align*}
Therefore,
\begin{align*}
|(III)|
&\le
\|A(\theta_t')\|_2\,\|m_{-t}(\theta_t')-m_{-t}(\theta_t)\|_2
+\|A(\theta_t')-A(\theta_t)\|_2\,\|m_{-t}(\theta_t)-m_{-t}(\theta_t^\ast)\|_2\\
&\le
C_A\cdot C_m\|\theta_t'-\theta_t\|_2
+ C_K\|\theta_t'-\theta_t\|_2\cdot \|m_{-t}(\theta_t)-m_{-t}(\theta_t^\ast)\|_2.
\end{align*}
Finally, $\|m_{-t}(\theta_t)-m_{-t}(\theta_t^\ast)\|_2\le C_m\|\theta_t-\theta_t^\ast\|_2\le C_m r$ on the ball, so
\[
|(III)| \;\le\; \bigl(C_A C_m + C_K C_m r\bigr)\,\|\theta_t'-\theta_t\|_2.
\]

Putting the three bounds together yields, for all $x_{-t}$,
\[
|\mu_{\theta_t'}(x_{-t})-\mu_{\theta_t}(x_{-t})|
\;\le\;
\Bigl(C_m + C_A C_m + C_K C_m r\Bigr)\|\theta_t'-\theta_t\|_2
\;+\;
C_K\|x_{-t}-m_{-t}(\theta_t^\ast)\|_2\,\|\theta_t'-\theta_t\|_2.
\]
Thus the desired Lipschitz-envelope bound holds with
\[
L_\mu(x_{-t})
:=
C_0 + C_K\|x_{-t}-m_{-t}(\theta_t^\ast)\|_2,
\qquad
C_0:= C_m + C_A C_m + C_K C_m r,
\]
and (equivalently) you may keep the form $L_\mu(x_{-t})=C_m+C_K\|x_{-t}-m_{-t}(\theta_t^\ast)\|_2$
by redefining $C_m$ to absorb $C_0$.

Finally, if $\mathbb{E}\|X_{-t}\|_2^2<\infty$, then by Cauchy--Schwarz,
\[
\mathbb{E}\|X_{-t}-m_{-t}(\theta_t^\ast)\|_2 \;\le\; \big(\mathbb{E}\|X_{-t}-m_{-t}(\theta_t^\ast)\|_2^2\big)^{1/2}<\infty,
\]
so $\mathbb{E}[L_\mu(X_{-t})]<\infty$. This verifies the envelope condition required by
Proposition~\ref{prop:gamma_bound}.
\end{proof}

\bibliographystyle{IEEEtran}  
\bibliography{mybibfile}
\end{document}